\newtheorem{theorem}{Theorem}
\newtheorem{lemma}[theorem]{Lemma}
\newtheorem{proposition}[theorem]{Proposition}
\newcommand{\ie}{\textit{i.e.,~}}
\newcommand{\eg}{\textit{e.g.,~}}
\title{Gaussian Weight Sampling for Scalable, Efficient and Stable Pseudo-Quantization Training}
\author{Myeonghwan Ahn \\
  Seoul National University \\
  \texttt{lightb0x@snu.ac.kr} \\
  \And
  Sungjoo Yoo \\
  Seoul National University \\
  \texttt{sungjoo.yoo@gmail.com} \\}
\begin{document}
\maketitle
\begin{abstract}
Ever-growing scale of large language models (LLMs) is pushing for improved efficiency, favoring fully quantized training (FQT) over BF16.
While FQT accelerates training, it faces consistency challenges and requires searching over an exponential number of cases, each needing over 200B tokens to ensure stability.

Pseudo-quantization training (PQT) addresses the issues of FQT, although it is not well-studied.
We explore the practical implications of PQT in detail and propose a noise distribution \(R\) that is floating-point (FP)-friendly, with ideal properties including stochastic precision annealing.
As a result, the proposed method serves as an effective theoretical foundation for low-precision FP parameters through PQT, utilizing efficient fake quantization via an addition and subsequent FP casting.

We demonstrate that Gaussian weight sampling is (1) scalable: supports low-precision FP parameters down to FP6 and high-precision noise up to 9-bit with BF16 operator.
The proposed method is (2) efficient: incurring computational overhead as low as 1.40\% on the A100 GPU in terms of Llama2 training tokens per second, and requiring 2 bytes per parameter in GPU memory.
We demonstrate that PQT with Gaussian weight sampling is (3) stable: closely following or even surpassing performance of the BF16 baseline while pre-training GPT2 and Llama2 models with up to 1B parameters and 300B tokens.
\end{abstract}

\begin{figure}[!ht]
\begin{subfigure}{0.56\columnwidth}
\includegraphics[trim={20 14 20 14},clip,width=\linewidth]{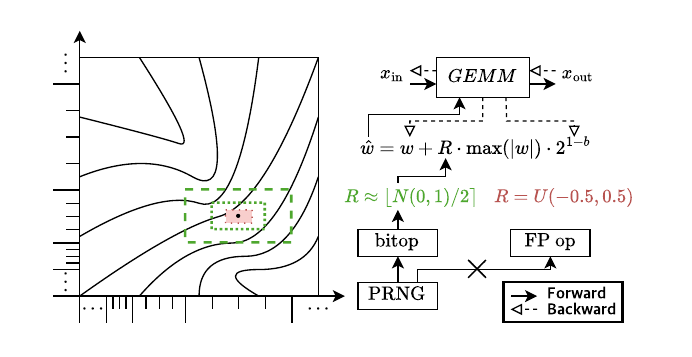}
\caption{\textbf{Overview of Gaussian weight sampling.} The figure on the left illustrates the loss landscape in real numbers, with FP-representable values on the axes.
The dot in the middle of the rectangle represents a parameter instance and the rectangles represent sampling space with the corresponding noise distribution on the right.
The figure on the right depicts computation graph of the proposed method.
We propose an FP-friendly noise distribution and an efficient noise generation method.}
\label{fig:overview_method}
\end{subfigure}
\hfill
\begin{subfigure}{0.42\columnwidth}
\includegraphics[trim={0 10 0 10},clip,width=\linewidth]{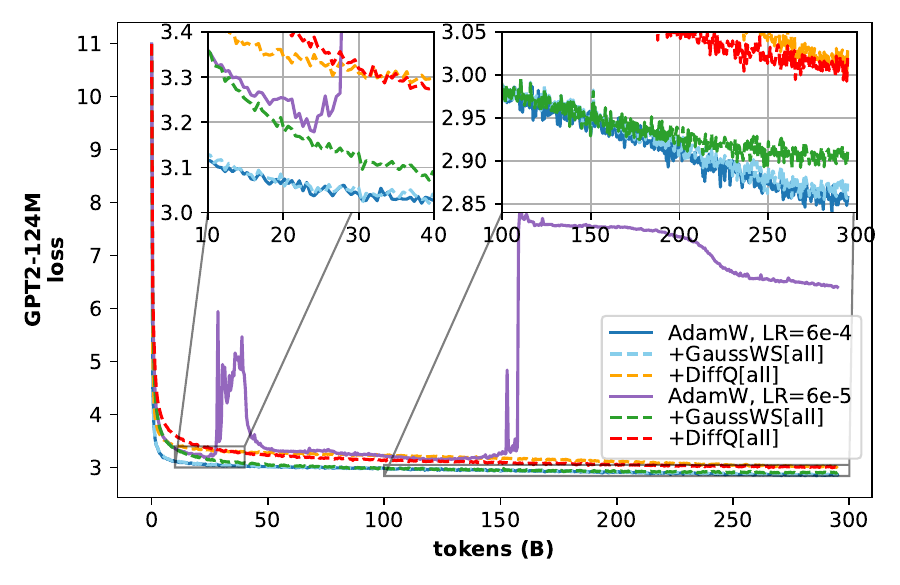}
\caption{\textbf{Loss curve of pre-training the GPT2-124M model on the OpenWebText dataset.}
``method[all]'' denotes that all linear layers of all transformer blocks adopt the corresponding method.
Both PQT methods mitigate training instability of baseline BF16 (purple).
GaussWS (skyblue and green) outperforms DiffQ (orange and red) throughout the training and rivals the best-case performance of baseline BF16 (blue).}
\label{fig:overview_result}
\end{subfigure}
\caption{Summary of Gaussian weight sampling (GaussWS).}
\label{fig:overview}
\end{figure}

\section{Introduction}
The training cost of large language model (LLM) has increased as the model size has grown over time.
For Llama 3 \cite{llama3}, the training utilizes 16k H100 GPUs, which require \(\approx\)11.2MW of electricity.
Studies have been conducted to reduce training cost. 
Parameter-efficient fine-tuning (PEFT) methods, \eg LoRA~\cite{LoRA}, QLoRA~\cite{QLoRA} and LoQT~\cite{LoQT}, reduce the size of training parameters.
FP8-LM~\cite{FP8_LM} reduces the volume of collective communication.
Adam-mini~\cite{adam_mini} and APOLLO~\cite{apollo} introduce optimizer with less internal state which results in less GPU memory required.

Especially, low-precision datatypes such as FP8~\citep{fp8fordl} and Microscaling (MX)~\citep{ocp_mx} have been proposed.
However, training with such datatypes faces two critical problems.
First, it faces consistency challenges, leading to suboptimal training, for example due to quantization-induced oscillations similar to traditional quantization-aware training (QAT) methods based on straight-through estimator~(STE) \citep{STE}\citep{chen2025oscillationreducedmxfp4trainingvision}.
Second, the problem of mapping parts of the model to specific bit precision is of an exponential complexity.
For example, mapping \(n\) linear layers to either FP8 or BF16 results in \(O(2^n)\) cases, each of which requires extensive training with over 200B tokens to validate training stability \citep{scaling_fp8}.
This makes manual search methods inefficient and suboptimal, thereby often resorting to simple, robust yet costly designs, \eg all BF16 training \citep{gemmateam2025gemma3technicalreport}.

Pseudo-quantization training (PQT), \eg DiffQ \cite{diffq} and NIPQ \cite{nipq}, can solve the problems.
PQT employs pseudo-quantization noise (PQN), which generalizes over actual quantization noise, during the training process to enable fully differentiable training.
It effectively reduces the search space to \(O(1)\) while mitigating the consistency challenges.
However, existing PQT methods do not thoroughly explore the numerical behavior of the computation, \ie addition and the subsequent casting.
In this paper, we show practical implications of PQT with respect to floating-point (FP) representation of model parameter \(w\).
Specifically, the addition of \(w\) and its corresponding PQN, followed by FP casting, induces underflow of the relatively smaller value among \(w\) and its PQN, effectively limiting the precision of the computation.

We propose rounded normal distribution as a basis of PQN to get an FP-friendly PQT solution which is precision-scalable, offers efficient implementations, and, importantly, enables stable training.
Firstly, the proposed PQN supports wide range of bitwidths. Assuming BF16 operator, rounded normal distribution enables PQN with effective 9-bit precision while uniform distribution supports up to 5-bit.
We also show that PQT with the proposed distribution conducts stochastic precision annealing, which renders the training robust to information loss at small-magnitude parameters \(w_{ij}\approx 0\).
This provides an upper bound on the dynamic range of \(w\), thereby determining the required number of FP exponent bit.
The proposed rounded normal distribution can be generated efficiently on both current and legacy hardware by leveraging bitwise operations. 
We demonstrate that the proposed method enables stable pre-training that closely follows, or even outperforms, the BF16 baseline for the GPT2-124M and the Llama2-\{134M, 1B\} models up to 300B tokens of training.

We compare the related methods in Table~\ref{tab:comparison}. To sum up, our contributions are as follows:
\begin{itemize}
\item We investigate practical implications of PQT that limit effective precision of FP.
\item We propose rounded Gaussian distribution as a basis of PQN which supports wide range of FP datatypes while enabling efficient noise generation.
\item We demonstrate that GaussWS enables stable PQT that closely follows, or even outperforms, the baseline BF16 on pre-training GPT2 and Llama2 language models up to 300B tokens.
\end{itemize}

\section{Preliminary}\label{sec:motivation}
\subsection{Fully quantized training and consistency challenges}
\label{sec:training_ocp_mx}
Reducing bit precision is one of the most effective methods to reduce the training cost of LLM \cite{FP8_LM}\cite{scaling_fp8}\cite{llm_fp4train}\cite{chen2025oscillationreducedmxfp4trainingvision}\cite{tseng2025trainingllmsmxfp4}\cite{deepseek_v3}\cite{ocp_mx}.
Specifically, as an illustration of MX matrix multiplication, consider multiplying two matrices \(A_{(M,K_Q)}\) and \(W_{(K_Q,N)}\) which are quantized along the inner dimension of the multiplication with block size of 32.
The subscript corresponds to the shape of the matrix and \(_Q\) denotes quantization axis.
The matrix multiplication is realized via vector dot products (of size-32 MX blocks) which are realized by MX-based compute units.

A na\"ive application of low-precision training can compromise the consistency of values between the forward and backward passes, which hurts training stability.
Consider forward and backward passes of an MX-compliant matrix multiplication where the quantization axis lies along the inner dimension:
\begin{gather}
    \label{eq:demo_forward}
    T_{(M,N)}=A_{(M,K_Q)}W_{(K_Q,N)}
    \\
    \frac{\partial L}{\partial W}_{(K,N)}
    =
    A^T_{(K, M_Q)}
    \frac{\partial L}{\partial T}_{(M_Q,N)}
    \quad\text{and}\quad
    \frac{\partial L}{\partial A}_{(M,K)}
    =
    \frac{\partial L}{\partial T}_{(M, N_Q)} W^T_{(N_Q, K)}
    \label{eq:demo_backward_2}
\end{gather}
where \(A\) denotes the input activation, \(W\) the parameter, \(T\) the output activation and \(L\) the target loss.
The subscript corresponds to the shape of the matrix and \(_Q\) denotes quantization axis.
Note the difference of \(W\) between the forward and backward passes, \ie \(W_{(K_Q,N)}\) compared to \(W^T_{(N_Q, K)}\), which is demonstrated in Figure~\ref{fig:mx_fb_error}.
This inconsistency can lead to suboptimal training~\citep{chen2025oscillationreducedmxfp4trainingvision}.
The issue arises because the absolute maximum value of the block, \eg size-2 blocks in Figure~\ref{fig:mx_fb_error}, changes when transposed.
Square-blockwise quantization can resolve this problem and ensure transpose-commutativity.

Another inconsistency, \ie oscillation problem, arises from the discrepancy between the high-precision parameter \(w\) and its low-precision quantized counterpart \(\hat{w}\).
The issue stems from updates \(\pm\epsilon\) on \(w\) becoming larger and biased when it comes to \(\hat{w}\) over multiple iterations of training \citep{diffq}\citep{nipq}.
To mitigate this inconsistency, stochastic rounding can be applied~\citep{tseng2025trainingllmsmxfp4}, and its noise can be generalized to pseudo-quantization noise~(PQN), leading to pseudo-quantization training~(PQT).

\subsection{Pseudo-quantization training}
\label{sec:problem_pqt}
Pseudo-quantization training (PQT) incorporates pseudo-quantization noise (PQN) that generalizes over actual quantization noise \(\hat{w}-w\) during training.
For example, the formulation of DiffQ is \(\hat{w}=w+\Delta\cdot U(-0.5,0.5)\), where \(\Delta\) is the stepsize \(\frac{\max(w)-\min(w)}{2^b-1}\) and \(b\) is a parameter representing the number of target bits for \(\hat{w}\).
Note that it can be seen as stochastically sampling \(\hat{w}\) from the space around \(w\).
The sampling is fully differentiable, allowing direct training of \(b\) without the bias of round-to-nearest and the approximation of STE.
As such, PQT can serve as a theoretical foundation for FQT.
However, current PQT methods lack practical consideration on the datatype of \(w\) and \(\hat{w}\), and we find them neither FP-friendly nor computationally efficient.

Current PQT methods are not FP-friendly, as they use uniform noise \(U(-0.5, 0.5)\) as a basis of PQN.
This requires unnecessary precision and disrupts forward-backward consistency with numerical instability.
Consequently, these methods are limited with respect to the range of ``safe'' bitwidths for PQN and necessitate high-precision operators such as FP32.
Refer to Section~\ref{sec:characteristics} for detail.

Current PQT methods are not computationally efficient.
They generate random values by performing FP operations on random integer streams produced by pseudo-random number generator (PRNG).
This exacerbates the bottleneck on vector operator (CUDA core) during the training process.
This issue is particularly pronounced on NVIDIA datacenter GPUs like the A100 because they have relatively lower vector operation throughput compared to their consumer counterparts.
Refer to Section~\ref{sec:generate} for detail and Section~\ref{sec:overhead_result} for empirical results.

\section{Method}\label{sec:gaussws}
\subsection{Overview}\label{sec:overview}
We aim to establish an effective theoretical foundation for the datatypes of \(w\) and \(\hat{w}\) by addressing the limitations of current PQT methods.
For the datatypes, we consider MX, particularly MXFP where internal datatype of MX is low-precision FP.
Firstly in Section~\ref{sec:gaussws_main}, we formulate Gaussian weight sampling that extends PQT to be MX-compliant.
Then in Section~\ref{sec:characteristics}, we study implications behind the choice of random distribution \(R\) on PQT and propose rounded Gaussian \(\lfloor \mathcal{N}(0,1)/2\rceil\).
Approximated rounded normal distributions can be generated efficiently by leveraging bitwise operations (Section~\ref{sec:generate}).
The kernels are implemented in Triton~\citep{triton} with decisions that favor predictably optimal throughput and straightforward implementation (Section~\ref{sec:kernel}).
Section~\ref{sec:impl_detail} describes implementation details, including the method to ensure unbiased PQN while maintaining forward-backward consistency, and the bitwidth parameter implementation.

\subsection{Gaussian weight sampling}
\label{sec:gaussws_main}
In Gaussian weight sampling, parameters are grouped into square block units to ensure transpose-commutativity as discussed in Section~\ref{sec:training_ocp_mx}.
Note that square-blockwise quantization is a special case of vector-wise quantization where adjacent vectors share the same scale, making it MX-compliant.

The formulation of Gaussian weight sampling is as follows:
\begin{equation}
    \label{eq:gws_forward}
    \hat{w}
    =w+R\odot \text{broadcast}_{b_l}\left(\max_{b_l}(|w|) \cdot 2^{1-b_t}\right)
\end{equation}
where \(w, \hat{w}, R\in\mathbb{R}^{m\times n}\), \(b_t\in\mathbb{R}^{\lceil m/b_l\rceil\times \lceil n/b_l\rceil}\), and \(b_l=32\) is the square block size following MX.
\(\max_{b_l}\) denotes square-blockwise maximum while \(\text{broadcast}_{b_l}\) is a function \(f:\mathbb{R}^{\lceil m/b_l\rceil\times\lceil n/b_l\rceil}\rightarrow \mathbb{R}^{m\times n}\) that replicates the same value square-blockwise.
\(\odot\) and \(\cdot\) denote the Hadamard product.
\(w\) is an original parameter, \(\hat{w}\) is a sampled parameter, \(R\) represents random and \(b_t\) is blockwise bitwidth.
We refer to the right-hand side of the addition as PQN.

Note that Equation~\ref{eq:gws_forward} is fully differentiable. With an approximation of \(\frac{\partial\max_{b_l}(|w|)}{\partial w}\approx 0\) assuming gradient to single element out of 32 by 32 block is negligible, we can calculate the gradient as follows:
\begin{equation}
\frac{\partial L}{\partial w}
=
\frac{\partial L}{\partial \hat{w}}
\quad\text{and}\quad
\frac{\partial L}{\partial b_t}
=
-\ln2\cdot \max_{b_l}(|w|)\cdot2^{1-b_t}\cdot\sum_{b_l}\left(\frac{\partial L}{\partial \hat{w}}\odot R\right)
\label{eq:gws_backward_2}
\end{equation}

\begin{figure}
    \centering
    \includegraphics[trim={0 10 0 6},clip,width=\linewidth]{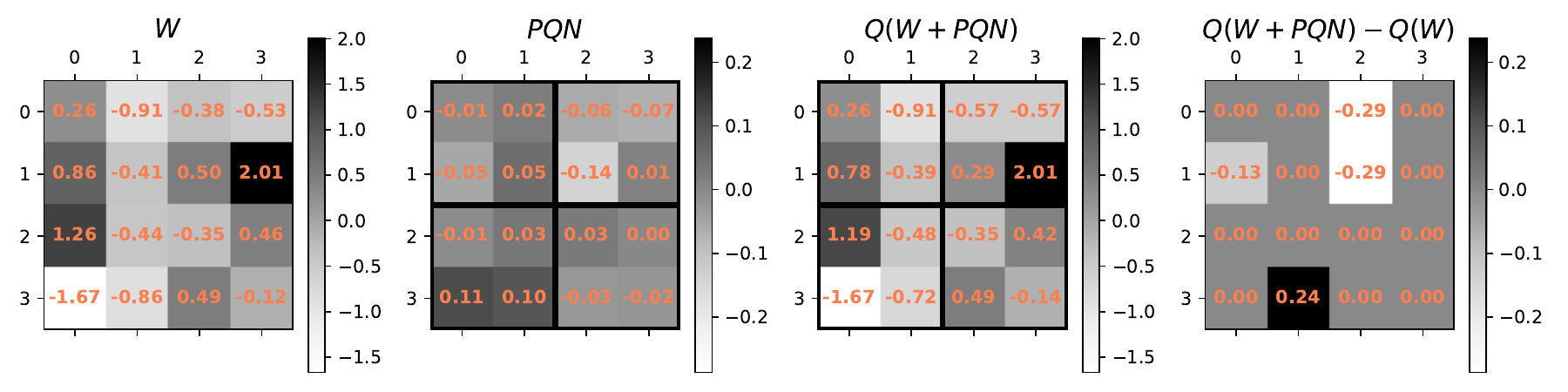}
    \caption{Example of effective PQN as in Equation~\ref{eq:gws_forward} with \(b_l=2\), \(R=U(-0.5,0.5)\) and \(b_t=4\) being underflow.
    Boxes wrapped in bold solid lines represent quantization groups with a square block size of 2, and an internal datatype of INT4 for \(Q(\cdot)\).
    The second matrix represents PQN reflected in the backward pass while the fourth matrix represents the effective PQN during the forward pass.}
    \label{fig:effective_PQN}
\end{figure}

\subsection{Effect of \textit{R} on PQT}
\label{sec:characteristics}
Note that in practice compute precision is limited and thus casting \(\hat{w}\), \eg to floating-point with \(e\)-bit exponent and \(m\)-bit mantissa, is required.
Here we discuss implications of such computation, \ie \(fp_{e,m}(\hat{w})\), during PQT.

FP casting \(fp_{e,m}(\cdot)\) introduces rounding errors similar to integer casting.
However, the backward pass in Equation~\ref{eq:gws_backward_2} has no indication of whether the PQN was rounded to zero or not in the forward pass of Equation~\ref{eq:gws_forward} and the subsequent FP casting.
See the demonstrated example in Figure~\ref{fig:effective_PQN}.
Therefore, the computation \(fp_{e,m}(\hat{w})\) should not underflow to ensure consistency of the values.
Specifically,
\begin{equation}
\label{eq:forward_backward}
fp_{e,m}(\hat{w}_{ij})
\neq
fp_{e,m}(w_{ij})
^\forall i,j|_{R_{ij}\neq 0}
\end{equation}
where the addition to get \(\hat{w}_{ij}\) limits precision of \(\min(|w_{ij}|,|PQN_{ij}|)\).
Thus, we consider the cases \(|w_{ij}|>|PQN_{ij}|\) and \(|w_{ij}|<|PQN_{ij}|\), using the smallest non-zero value of PQN (via \(2^\tau\triangleq \min_{R_{ij}\neq 0}|R|\)) and near-zero elements of \(w\) (\(\pm\epsilon\) where \(2^\xi\triangleq |\epsilon|\)).
Firstly, consider the former.
\begin{lemma}
\label{prop:precision_largest}
PQN that corresponds to \(b_t\)-bit do not underflow during the computation \(fp_{e,m}(\hat{w})\) if:
\begin{equation}
    \label{eq:underflow_bound}
    b_t < m + 2 + \tau
\end{equation}
\end{lemma}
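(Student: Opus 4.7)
The plan is to derive the bound by identifying when the floating-point addition $\hat{w}_{ij} = w_{ij} + \mathrm{PQN}_{ij}$ followed by casting to $fp_{e,m}$ fails to register any change from $w_{ij}$ in the worst case. From Equation~\ref{eq:gws_forward}, the magnitude of $\mathrm{PQN}_{ij}$ is $|R_{ij}| \cdot \max_{b_l}(|w|) \cdot 2^{1-b_t}$, so the smallest non-zero magnitude is $|\mathrm{PQN}|_{\min} = \max_{b_l}(|w|) \cdot 2^{1-b_t+\tau}$ by the definition $2^\tau \triangleq \min_{R_{ij}\neq 0} |R|$.

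Next I would identify the worst-case target element $w_{ij}$. Since we are in the regime $|w_{ij}| > |\mathrm{PQN}_{ij}|$, the precision of the sum is governed by $w_{ij}$, and specifically by its ULP in the $fp_{e,m}$ format. The ULP grows with the magnitude of $w_{ij}$, so the worst case within the block is $|w_{ij}| = \max_{b_l}(|w|)$. Writing $\max_{b_l}(|w|) = 2^{\alpha}(1+f)$ with $f \in [0,1)$, the ULP is $2^{\alpha - m}$ and the round-to-nearest threshold — the smallest perturbation that can flip the rounded value — is $2^{\alpha - m - 1}$.

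The condition in Equation~\ref{eq:forward_backward} for this worst-case index is therefore $|\mathrm{PQN}|_{\min} > 2^{\alpha - m - 1}$, i.e.,
\begin{equation*}
2^{\alpha}(1+f) \cdot 2^{1-b_t+\tau} > 2^{\alpha - m - 1}.
\end{equation*}
Using $(1+f) \geq 1$ as the worst sub-case and cancelling $2^\alpha$ leaves $2^{1-b_t+\tau} > 2^{-m-1}$, which rearranges to $b_t < m + 2 + \tau$.

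The main obstacle is handling the floating-point rounding semantics cleanly: distinguishing ULP from the half-ULP rounding threshold, treating the boundary case $f=0$ (where the exponent of $\max_{b_l}(|w|)$ just ticks over) so that the worst-case ULP is genuinely $2^{\alpha - m}$, and being explicit that the inequality is required to hold for every index $ij$ with $R_{ij}\neq 0$, which is why it suffices to control the maximum-magnitude element in each $b_l \times b_l$ block.
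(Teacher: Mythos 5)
Your proof is correct and follows the same overall strategy as the paper's: compare the smallest non-zero PQN magnitude, \(2^{\tau+1-b_t}\max_{b_l}(|w|)\), against the rounding granularity of the worst-case (largest-magnitude) element of the block. The one genuine difference is where the constant \(2\) in \(b_t < m+2+\tau\) comes from. The paper demands that the PQN exceed the full stepsize \(2^{\lfloor\log_2|w_{ij}|\rfloor-m}\) and then recovers a factor of two from the bound \(\log_2\max(|w|)-\lfloor\log_2\max(|w|)\rfloor<1\) on the fractional part of the exponent; you instead demand that the PQN exceed the half-ULP round-to-nearest threshold \(2^{\alpha-m-1}\) and take the worst case \((1+f)\geq 1\). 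Your route is arguably the more defensible one: the paper's simplification leans on the favorable end of the fractional-part range (when \(\max(|w|)\) is an exact power of two, the paper's own inequality with the full stepsize would only yield \(b_t<m+1+\tau\)), whereas the half-ULP criterion delivers \(b_t<m+2+\tau\) uniformly over all block maxima and directly reflects what makes \(fp_{e,m}(\hat{w}_{ij})\neq fp_{e,m}(w_{ij})\) under round-to-nearest. The residual caveats you flag yourself --- tie-breaking at exactly half a ULP, and the asymmetric ULP when the sum crosses a power-of-two boundary --- are real but no worse than the level of rigor in the paper's own argument.
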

\begin{proof}
    To prevent underflow of PQN, magnitude of the non-zero PQN should be larger than the stepsize of the FP exponent range corresponding to \(w\).
    Specifically, for each block and \(^\forall i,j|_{R_{ij}\neq 0}\),
    \begin{equation}
    \label{eq:underflow_bound_basis}
    \left|R_{ij}\right|\left(\max(|w|)\cdot 2^{1-b_t}\right)
    > 2^{\lfloor \log_2|w_{ij}|\rfloor-m}
    \end{equation}
    With \(2^{\tau}=\min_{R_{ij}\neq 0}|R|\) and \(\log_2\max|w|-\lfloor\log_2\max|w|\rfloor <1\), it simplifies to \(b_t < {m}+2+\tau\).
\end{proof}

\cref{prop:precision_largest} indicates that a larger \(\min_{R_{ij}\neq0}|R|\) is desirable with respect to PQT to maximize the flexibility of \(b_t\) given a fixed \(m\).
However, such \(R\) limits the precision of \(fp_{e,m}(\hat{w})\) in the latter case.

\begin{lemma}
\label{prop:precision_smallest}
The values of small magnitude \(\pm\epsilon\) in \(w\), where \(|\epsilon|=2^\xi\), do not underflow during the computation \(fp_{e,m}(\hat{w}_{ij})^\forall i,j|_{R_{ij}\neq 0}\) if:
\begin{equation}
\xi>\lfloor \tau+2-b_t+\log_2\max(|w|)\rfloor-m
\end{equation}
\end{lemma}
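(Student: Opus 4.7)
My plan is to mirror the proof of \cref{prop:precision_largest} but swap which term dominates in the sum \(\hat{w}_{ij} = w_{ij} + \text{PQN}_{ij}\). Since we are in the case \(|w_{ij}| = |\epsilon| < |\text{PQN}_{ij}|\), the FP stepsize of \(fp_{e,m}(\hat{w}_{ij})\) is governed by the exponent of PQN rather than that of \(w_{ij}\). The first step is therefore to express ``\(\epsilon\) does not underflow'' as the requirement that \(|\epsilon|\) strictly exceed the stepsize at \(\hat{w}_{ij}\), namely \(|\epsilon| = 2^{\xi} > 2^{\lfloor \log_2|\hat{w}_{ij}|\rfloor - m}\).

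Next, I would upper-bound \(\lfloor \log_2|\hat{w}_{ij}|\rfloor\) by considering the worst-case same-sign alignment, giving \(|\hat{w}_{ij}| \leq |\text{PQN}_{ij}| + |\epsilon| < 2|\text{PQN}_{ij}|\) (using the case assumption \(|\epsilon| < |\text{PQN}_{ij}|\)). This yields \(\lfloor \log_2|\hat{w}_{ij}|\rfloor \leq \lfloor \log_2|\text{PQN}_{ij}|\rfloor + 1\). The \(+1\) is the carry induced by the addition, and it is precisely why the statement features \(\tau+2\) rather than \(\tau+1\).

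The third step is to substitute the smallest non-zero PQN magnitude via \(\min_{R_{ij}\neq 0}|R_{ij}| = 2^\tau\). From \cref{eq:gws_forward}, \(|\text{PQN}_{ij}| = 2^{\tau+1-b_t}\max(|w|)\), so \(\log_2|\text{PQN}_{ij}| = \tau + 1 - b_t + \log_2\max(|w|)\). Combining with the carry bound and using \(\lfloor x \rfloor + 1 = \lfloor x + 1 \rfloor\) gives a stepsize at most \(2^{\lfloor \tau + 2 - b_t + \log_2\max(|w|)\rfloor - m}\), and inverting the underflow inequality delivers the claimed condition on \(\xi\).

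The main obstacle I anticipate is the exponent-carry step: bounding \(\lfloor \log_2(|\text{PQN}| + |\epsilon|) \rfloor\) is delicate because, depending on whether \(\log_2|\text{PQN}|\) is near an integer boundary, the carry can be strict or loose. I would work with the loose bound \(|\hat{w}_{ij}| < 2|\text{PQN}_{ij}|\) throughout and let the outer floor absorb any slack, paralleling the trick used at the end of the proof of \cref{prop:precision_largest} to handle the fractional part of \(\log_2\max(|w|)\).
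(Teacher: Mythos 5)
Your proposal is correct and takes essentially the same route as the paper's proof: both reduce to the smallest non-zero perturbation \(2^{\tau+1-b_t}\max(|w|)\), bound the exponent of \(\hat{w}_{ij}\) by one carry above that of the PQN (the paper phrases your \(|\hat{w}_{ij}|<2|\mathrm{PQN}_{ij}|\) step as \(\pm\epsilon+2^{\tau+1-b_t}\max(|w|)\in[0,2^{\lfloor s\rfloor+2})\) with \(s\triangleq\tau+1-b_t+\log_2\max(|w|)\)), and then compare \(2^\xi\) against the resulting stepsize \(2^{\lfloor s\rfloor+1-m}\). The only cosmetic difference is that the paper works with the containing dyadic interval rather than your explicit floor inequality; the conclusion and the use of \(\lfloor x\rfloor+1=\lfloor x+1\rfloor\) are identical.
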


The proofs of \cref{prop:precision_smallest} and the following propositions are in Appendix~\ref{sec:proof}.
\cref{prop:precision_smallest} shows that there is a lower bound on the magnitude of \(\epsilon\) to prevent it from being 0.
This leads to an upper bound on effective number of exponent bits for the FP representation of \(w\) and thereby that of \(\hat{w}\).

\begin{proposition}
\label{prop:interpretation}
\textbf{(FP exponent cutoff)} Assuming \(\min_{w_{ij}\neq 0}(|w|)>2^{\lfloor \tau+2-b_t+\log_2\max(|w|)\rfloor-m}\), floating-point with a \(\lceil \log_2(-\tau+b_t+1)\rceil\)-bit exponent suffices to represent \(w\) without underflow, whereas floating-point with a \(\lceil \log_2(-\tau+b_t+3)\rceil\)-bit exponent suffices to represent \(\hat{w}\).
\end{proposition}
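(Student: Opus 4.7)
The plan is to convert the hypothesis on $\min_{w_{ij}\neq 0}|w|$ into an upper bound on the dynamic range $\log_2(\max|w|/\min|w|_{\neq 0})$ and then translate that range into the minimum number of FP exponent bits needed to store every non-underflowed value.

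\textbf{Step 1: dynamic range of $w$.} Writing $M\triangleq\log_2\max|w|$ and applying $\lfloor x\rfloor>x-1$ to the hypothesis $\min|w|_{\neq 0}>2^{\lfloor M+\tau+2-b_t\rfloor-m}$, I would get $\log_2\min|w|_{\neq 0}>M+\tau+1-b_t-m$, hence
\[
\log_2\!\bigl(\max|w|/\min|w|_{\neq 0}\bigr)<b_t-\tau-1+m.
\]
So $w$ occupies at most $b_t-\tau-1+m$ distinct $\log_2$-magnitude levels.

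\textbf{Step 2: FP exponent codes for $w$.} An $e$-bit exponent provides $2^e$ codewords; by standard convention two are reserved (zero/subnormal and $\infty$/NaN), giving $2^e-2$ normal exponent levels. Subnormals reuse the zero-exponent code to encode $m$ extra low-magnitude levels, so an $(e,m)$-FP covers up to $2^e-2+m$ magnitude levels in total. Requiring this to accommodate the range from Step~1 gives $2^e-2+m\geq b_t-\tau-1+m$, the $m$'s cancel, and we obtain $2^e\geq -\tau+b_t+1$, i.e.\ $e=\lceil\log_2(-\tau+b_t+1)\rceil$, as claimed. Note that the cancellation of $m$ explains why the proposition's bound is $m$-free.

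\textbf{Step 3: bound for $\hat{w}$.} Applying the triangle inequality to $\hat{w}=w+R\cdot\max|w|\cdot 2^{1-b_t}$ gives $\max|\hat{w}|\leq\max|w|\bigl(1+\max|R|\cdot 2^{1-b_t}\bigr)$. Using the effective support of $R$ under the rounded normal distribution, one can show that the bracketed factor is at most $4$, so $\log_2\max|\hat{w}|\leq M+2$, inflating the $\log_2$-range by at most two levels on top of $w$. Since the lower end of $\hat{w}$ cannot drop below the FP granularity already fixed by $w$'s block, the overall count becomes $b_t-\tau+1+m$ magnitude levels. Re-running the code-counting of Step~2 then yields $2^e\geq -\tau+b_t+3$, i.e.\ $e=\lceil\log_2(-\tau+b_t+3)\rceil$.

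\textbf{Main obstacle.} Steps~1 and 2 are arithmetic manipulations of Lemma~\ref{prop:precision_smallest} together with an exponent-accounting identity, which I expect to be routine. The delicate part is Step~3, specifically the \emph{exact} $+2$ gap between $w$ and $\hat{w}$: it requires a tight characterisation of the effective maximum of $|R|$ under the rounded Gaussian (so that $\max|R|\cdot 2^{1-b_t}\leq 3$) together with a careful verification that the addition $w+\mathrm{PQN}$ does not produce magnitudes below the smallest subnormal of the target $(e,m)$-format, to avoid a further expansion on the lower end. Pinning down both endpoints simultaneously, in the regime dictated by Lemma~\ref{prop:precision_largest}, is the crux of the proof.
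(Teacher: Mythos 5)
Your Steps 1--2 are sound and amount to the paper's argument for $w$ in different bookkeeping: the paper counts $(-\tau-1+b_t)$ normal exponent ranges (from $\lfloor\log_2\max|w|\rfloor$ down to $\lfloor\tau+2-b_t+\log_2\max|w|\rfloor$) plus one subnormal code and one NaN/Inf code, which is the same $2^e\geq-\tau+b_t+1$ you reach by cancelling the $m$ subnormal octaves on both sides of your inequality.

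The genuine gap is in Step 3, precisely at the point you yourself identify as the crux. You put the entire $+2$ at the upper end ($\log_2\max|\hat w|\leq M+2$) and assert that the lower end of $\hat w$ ``cannot drop below the FP granularity already fixed by $w$'s block.'' Both halves are off, and the two errors cancel. For the upper end, under the working assumption $0<|R\cdot 2^{1-b_t}\max(|w|)|<\max(|w|)$ (for the proposed $R$ one has $\max|R|=2$, so your bracketed factor is at most $2$, not $4$), one gets $|\hat w|<2\max|w|$: the widest exponent range moves up by exactly \emph{one}, to $[2^{\lfloor\log_2\max|w|\rfloor+1},2^{\lfloor\log_2\max|w|\rfloor+2})$. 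For the lower end, the addition $w_{ij}+R_{ij}2^{1-b_t}\max(|w|)$ can nearly cancel, so the smallest non-zero $|\hat w_{ij}|$ is \emph{not} bounded below by $\min_{w_{ij}\neq0}|w|$; it can be as small as a single stepsize of the exponent range containing $\max\left(|w_{ij}|,2^{\tau+1-b_t}\max(|w|)\right)$, whose minimum over $i,j$ lies in $[2^{\lfloor\tau+1-b_t+\log_2\max(|w|)\rfloor},2^{\lfloor\tau+2-b_t+\log_2\max(|w|)\rfloor})$ --- one exponent range \emph{below} the smallest normal range needed for $w$. This $\min_{i,j}\max(\cdot,\cdot)$ argument is how the paper obtains $(-\tau+1+b_t)$ normal ranges ($+1$ at the top and $+1$ at the bottom relative to $w$), hence $2^e\geq-\tau+b_t+3$. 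Your final formula is correct only because the unjustified extra octave at the top compensates for the missing octave at the bottom; the cancellation analysis that you correctly flag as the delicate point is absent from the proof as written.
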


\cref{prop:precision_smallest} and \cref{prop:interpretation} show that smaller \(\min_{R_{ij}\neq0}|R|\) is desirable to maintain the precision of near-zero values of the original parameter \(w\) and its sampled counterpart \(\hat{w}\), which seems to counter \cref{prop:precision_largest}.
However, \(R\) with non-trivial \(Pr(R=0)\) is able to balance both cases and allows for larger \(\tau\) through the following:

\begin{proposition}
\label{prop:stochastic_precision_annealing}
\textbf{(Stochastic precision annealing)} Assume \(Pr(R=0)=p\). Computing \(\hat{w}\) preserves the precision of \(w\) with a probability of at least \(p\), while computing \(fp_{e,m}(\hat{w})\) masks \(|w_{ij}|<2^{\lfloor \tau+2-b_t+\log_2\max(|w|)\rfloor-m}\) with 0 at a probability of up to \(1-p\).
\end{proposition}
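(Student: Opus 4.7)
The plan is to decompose the proposition into its two clauses and prove each by conditioning on the event $\{R_{ij}=0\}$, whose probability is $p$ by hypothesis. Both clauses rest on the same dichotomy: when $R_{ij}=0$, the PQN term in Equation~\ref{eq:gws_forward} vanishes and $\hat{w}_{ij}=w_{ij}$ exactly, while $R_{ij}\neq 0$ activates the magnitude analysis already carried out in \cref{prop:precision_largest} and \cref{prop:precision_smallest}.

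For the first clause, substituting $R_{ij}=0$ into Equation~\ref{eq:gws_forward} gives $\hat{w}_{ij}=w_{ij}$, and hence $fp_{e,m}(\hat{w}_{ij})=fp_{e,m}(w_{ij})$, so no information about $w_{ij}$ is lost. Since this event has probability exactly $p$, the precision of $w$ is preserved with probability at least $p$. The \emph{at least} qualifier is immediate: precision is additionally preserved on portions of $\{R_{ij}\neq 0\}$ as well, for instance whenever the PQN magnitude is below half an ULP of the FP representation of $w_{ij}$, so the addition followed by the FP cast rounds back to $fp_{e,m}(w_{ij})$.

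For the second clause, I would assume $|w_{ij}|<2^{\lfloor \tau+2-b_t+\log_2\max(|w|)\rfloor-m}$ and again split on $R_{ij}$. On $\{R_{ij}=0\}$ we have $\hat{w}_{ij}=w_{ij}$, so $w_{ij}$ is not masked to $0$; the masking event is therefore contained in $\{R_{ij}\neq 0\}$, which already yields the upper bound $1-p$. To see that masking actually occurs in the $R_{ij}\neq 0$ branch, I would invoke \cref{prop:precision_smallest} in its contrapositive form: the hypothesis negates the stated bound on $\xi$, so $w_{ij}$ underflows relative to the PQN-dominated exponent of $\hat{w}_{ij}$, and $fp_{e,m}(\hat{w}_{ij})$ agrees with the value one would obtain by replacing $w_{ij}$ by $0$ before the addition.

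The main subtlety I expect is not the arithmetic, which is inherited from the two preceding lemmas, but articulating the semantic content of \emph{preserved} and \emph{masked with $0$} at FP granularity in a way that supports the \emph{at least} and \emph{up to} qualifiers. I would therefore phrase the conclusion as an event containment — preservation $\supseteq \{R_{ij}=0\}$ and masking $\subseteq \{R_{ij}\neq 0\}$ — rather than as a tight equality, and flag that the gap in each direction corresponds exactly to the sufficient-but-not-necessary nature of the underflow bounds in \cref{prop:precision_largest} and \cref{prop:precision_smallest}.
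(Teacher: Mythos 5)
Your proposal takes essentially the same route as the paper's proof: condition on the event $\{R_{ij}=0\}$, note that $R_{ij}=0$ makes $\hat{w}_{ij}=w_{ij}$ and hence preserves precision with probability $p$, and appeal to \cref{prop:precision_smallest} to show that underflow of sub-threshold $|w_{ij}|$ can occur only on $\{R_{ij}\neq 0\}$, giving the $1-p$ bound. Your version is somewhat more careful than the paper's two-sentence argument in making the \emph{at least} / \emph{up to} qualifiers precise via event containment, but it is the same argument.
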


According to \cref{prop:stochastic_precision_annealing}, we can expect that PQT using non-trivial \(Pr(R=0)\) effectively trains the model to be robust to \(w\) with low dynamic range by stochastically annealing precision of \(w_{ij}\approx 0\).
It effectively enables PQT that targets low-precision FP in conjunction with \cref{prop:interpretation}.

To achieve both flexibility of \(b_t\) and stochastic precision annealing without significantly deviating from previous works~\citep{diffq}\citep{nipq}, we propose \(R=\lfloor \mathcal{N}(0,1)/2\rceil\).
This \(R\) allows for a wide range of bitwidths \(b_t\) according to \cref{prop:precision_largest}, thanks to a relatively larger \(\tau=0\).
For example with a BF16 operator, the rounded normal \(\lfloor \mathcal{N}(0,1)/2\rceil\) supports \(b_t<9\), whereas the range is narrower with \(b_t<5\) for a uniform \(U(-0.5,0.5)\) in 4-bit representation.
It also allows for low-precision representation of \(\hat{w}\) down to FP6 when \(b_t\le 4\), by stochastically annealing \(|w_{ij}|<2^{\lfloor 2-b_t+\log_2\max(|w|)\rfloor-m}\) with a probability \(\lessapprox 0.317\).
As a result, it renders lower bound of the datatypes as follows: FP with \(\lceil\log_2(b_t+1)\rceil\)-bit exponent for \(w\), and FP with \(\lceil\log_2(b_t+3)\rceil\)-bit exponent and \((b_t-2)\)-bit mantissa for \(\hat{w}\).
We obtain Table~\ref{tab:fp_bit} with different \(b_t\).
Lastly, the approximated \(R\approx\lfloor \mathcal{N}(0,1)/2\rceil\) can be generated efficiently on both current and legacy hardware by leveraging bitwise operators.

\subsection{Efficient generation of \textit{R}}
\label{sec:generate}
Pseudo-random number generators (PRNGs), \eg Philox \cite{philox} and Romu \cite{romu}, produce random bit stream, \ie random integers.
Random numbers in the real number domain are derived from these random integers.
\(U(0, 1)\) is derived by dividing the random integers by their maximum possible value.
Two samples of \(\mathcal{N}(0, 1)\) are derived from two samples from \(U(0, 1)\) using the Box-Muller transform \citep{box_muller}.

Note that generation of approximated \(R\approx\lfloor \mathcal{N}(0, 1)/2\rceil\) does not require aforementioned operations.
Assuming that each bit of the random integers from the PRNG is independently random, we create a random distribution that approximates the rounded normal distribution using two base cases:
\begin{equation}
\label{eq:prob_basis}
    \begin{cases}
P(A \land B) = P(A) \cdot P(B)\\
P(A \lor B) = P(A) + P(B) - P(A \land B)
    \end{cases}
\end{equation}
where \(A\) and \(B\) represent bitwise random variables, \(\land\) and \(\lor\) denote the ``and'' and ``or'' operators, respectively, and \(P(X)\) is shorthand for \(Pr(X=1)\).
Specifically, the distribution that we generate is:
\begin{equation}
\label{eq:approx_distr}
\begin{cases}
Pr(-2)=Pr(+2)=3/4\cdot 2^{-9} \approx 1/682.7 \\
Pr(-1)=Pr(+1)=(3/4)^2 \cdot 2^{-2}\cdot(1-Pr(\pm2))\approx 1/7.1 \\
Pr(0)=1-Pr(\pm1)-Pr(\pm2) \approx 0.717
\end{cases}
\end{equation}

In our implementation, the generated \(R\) values are represented in a sign-mantissa format with 4 bits per element, and 8 elements are packed into a 32-bit register.
Compared to 2's complement, the sign-mantissa format is simpler to generate and reconstruct into floating-point.

\subsection{Design decisions}\label{sec:kernel}
\textbf{Separate kernels.} While the BF16 baseline requires only one operation for the forward pass of linear layer, the GaussWS counterpart requires three operations: (1) generating \(R\), (2) unpacking \(R\) and adding scaled maximum, and (3) the matrix multiplication.
Fusing consecutive operations typically helps achieve maximum throughput by reducing GPU memory communication.
However, we decided not to fuse the operations, considering the following trade-offs.

Firstly, \(R\) generation is not fused.
PRNG is an algorithm that loops based on its internal state to generate random values iteratively.
The longer a PRNG's internal state is reused, the more it reduces the degree of parallelism, limiting the utilization of parallel hardware.
In other words, there exists an optimal ratio of parallelization to maximize throughput.
Furthermore, additional communication is required if the number of random values \(R\) generated and consumed per CUDA core does not match.
In practice, fusing the generation of \(R\) with the subsequent operations led to significant variation of throughput depending on the shape of \(w\).

Secondly, we do not fuse the scaled addition with the subsequent matrix multiplication.
This decision allows us to use the highly optimized PyTorch implementation of the linear operation while keeping implementation straightforward.

\textbf{GPU memory.} Equations~\ref{eq:demo_backward_2} and \ref{eq:gws_backward_2} show that the gradient of input activation in matrix multiplication requires \(\hat{w}\), while the gradient of \(b_t\) requires regenerating \(R\).
The same \(R\) can be generated using the same seed value used in the forward pass, temporarily requiring 0.5 bytes per parameter.

We explicitly store \(\hat{w}\) in BF16, although reconstructing \(\hat{w}\) in the backward computation would have reduced its GPU memory overhead, \ie 2 bytes per parameter.
This approach helps keep the implementation simple at the cost of a reasonable increase in GPU memory.
Note that the overhead of 2 bytes per parameter is negligible for small models and can be offset for larger models by leveraging, for example, training parallelism~\citep{rajbhandari2020zeromemoryoptimizationstraining}\citep{zhao2023pytorchfsdpexperiencesscaling}\citep{llama3}\citep{gemmateam2025gemma3technicalreport} and parameter-efficient optimizers~\citep{adam_mini}\citep{apollo}.

In conjunction, the design decisions described above enable a straightforward implementation where \(f(w,b_t)=\hat{w}\) is modularized into a single PyTorch module.

\subsection{Implementation details}
\label{sec:impl_detail}
\textbf{Managing seed.}
A seed value is required to initialize the PRNG, and here we discuss the specific requirements for it.
Firstly, the value of \(R\) in the forward pass must be identical to the value of \(R\) in the backward pass for proper training.
Additionally, to avoid bias across the entire model, the \(R\) values for each layer should be independently random.

To achieve these requirements, a multi-layer PRNG is employed to manage seeds and their corresponding random values.
First, a PRNG or seed generator is initialized with the user-specified seed value.
Second, the seed generator is used to produce seed values to initialize the PRNG of each layer.
Finally, the output of each layer's PRNG serves as the seed value for the GPU's PRNG, which then generates \(R\).
The state of each layer's PRNG is changed every gradient update during training.

\textbf{Bitwidth.}
We implemented an internal bitwidth parameter \(b_i\) for each 32 by 32 square unit of parameters in the linear layers.
\(b_i\) is linearly scaled to represent bitwidth \(b_t\) as follows:
\begin{equation}
    \label{eq:bitwidth}
    b_t=b_\text{target}+b_i\cdot(b_\text{init}-b_\text{target})
\end{equation}
where \(b_\text{init}\) and \(b_\text{target}\) are hyperparameters representing the initial and target bitwidths, respectively.
\(b_i\) should be initialized with 1.
\(b_t\) is guided towards \(b_\text{target}\) through the weight decay applied to \(b_i\).

A loss term related to \(b_t\) can also be incorporated into the training loss \(L\):
\begin{equation}
    \label{eq:bitwidth_loss}
    L^\prime=L+\lambda\sum_{i=1}^{n} \frac{\sum_{j=1}^{m_i}|b_{t}^{i,j}-b_\text{target}|}{m_i}
\end{equation}
where \(n\) is the number of layers, \(m_i\) is number of square blocks in \(i\)-th layer and \(b_t^{i,j}\) denotes bitwidth of \(i\)-th layer and \(j\)-th block.
In this scenario, an additional hyperparameter \(\lambda\) is required to appropriately scale the loss associated with the bitwidth parameter.

\section{Experimental results}\label{sec:result}
Transformer~\cite{vaswani2023attentionneed}-based language models were trained from scratch: the GPT2-124M model~\citep{radford2019language} on the OpenWebText dataset~\citep{Gokaslan2019OpenWeb}, and the Llama2-134M and Llama2-1B models~\citep{touvron2023llama2openfoundation} on the C4 dataset~\citep{raffel2023exploringlimitstransferlearning}.
The loss curve of pre-training and the resulting bitwidth \(b_t\) are presented in Section~\ref{sec:opt_owt}.
Section~\ref{sec:overhead_result} reports the overhead of the proposed method.

We use ``method[part]'' to represent which linear layer(s) of all transformer blocks adopt the corresponding method.
[od] is used as shorthand for [out,down].
Note that the GPT2 transformer block comprises four linear layers: qkv, out, up, and down.
The qkv and out layers, along with the self-attention operation, constitute the attention module, while the up and down layers form the feed-forward module.
``DiffQ'' represents an extension of DiffQ~\citep{diffq}, which is equivalent to GaussWS except for BF16 \(U(-0.5, 0.5)\) in place of \(\approx\lfloor \mathcal{N}(0,1)/2\rceil\).

We used BF16 GEMM with FP32 accumulation.
Current FP datatypes with fewer than 16 bits, \eg FP8, support the lower bound on the exponent and mantissa bits of model parameters only when \(b_t\le 5\).
See the requirements of the proposed method in Table~\ref{tab:fp_bit} and the bitwidth results in Figure~\ref{fig:bitwidth_result}.
We used \(b_\text{init}=6\) and \(b_\text{target}=4\) unless otherwise specified.
Refer to Appendix~\ref{sec:experiment_detail} for detailed settings.

\begin{figure*}[ht]
    \begin{subfigure}{0.5\linewidth}
        \includegraphics[trim={0 12 0 10},clip,width=\linewidth]{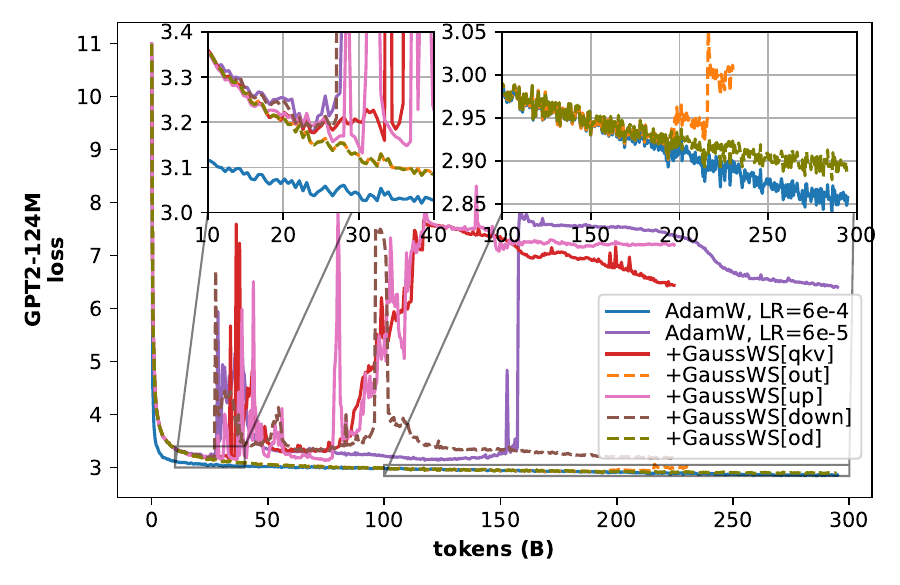}
        \caption{Stability case study with AdamW and \(b_\text{target}=0\).}
        \label{fig:opt2_adamw}
    \end{subfigure}
    \begin{subfigure}{0.5\linewidth}
        \includegraphics[trim={0 12 0 10},clip,width=\linewidth]{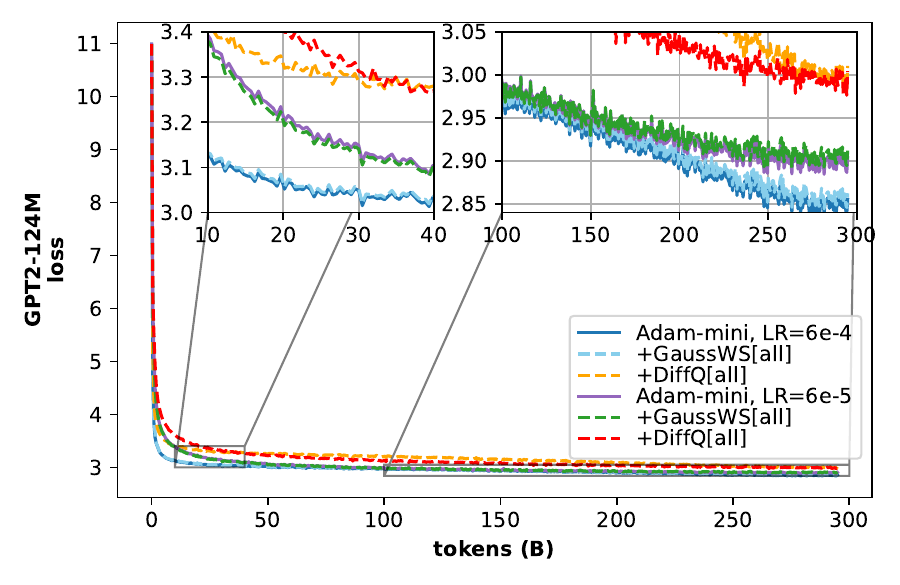}
        \caption{With Adam-mini optimizer.}
        \label{fig:opt2_adam_mini}
    \end{subfigure}
    \caption{Training loss curve of the GPT2-124M model on the OpenWebText dataset.}
    \label{fig:opt2_training}
\end{figure*}

\subsection{Pre-train results}\label{sec:opt_owt}
\textbf{The GPT2-124M model} is trained from scratch on the OpenWebText dataset up to 300B tokens~\citep{Karpathy2022}.
Results in Figure~\ref{fig:overview_result} show that the baseline BF16 training with a learning rate of \(6\times 10^{-4}\) proceeds smoothly whereas the counterpart with a learning rate of \(6\times 10^{-5}\) diverges and fails to recover.
Both PQT methods mitigate such training instability while the proposed method incurs minimal increase in loss.
The difference in performance between GaussWS and DiffQ is attributed to the choice of \(R\).
GaussWS consistently outperforms DiffQ, which aligns with the properties in Section~\ref{sec:characteristics}, especially considering bitwidth result of Figure~\ref{fig:bitwidth_result}.

To identify the source of baseline BF16 training instability, we restrict the application of the proposed method to each of the linear layers within all transformer blocks.
As shown in Figure~\ref{fig:opt2_adamw}, GaussWS[qkv], GaussWS[up], and GaussWS[down] begin to diverge at \(\approx\)30B tokens of training and fail to recover.
In contrast, GaussWS[out] does not diverge and closely approximates the best-case loss curve of baseline BF16 up to \(\approx\)200B tokens.
GaussWS[od], which applies the proposed method to the last layers of the residual addition branches in the transformer blocks, reduces divergence and yields the best result with the smaller learning rate.
These results show that the attention module is the source of instability at \(\approx\)30B tokens of training, while the feed-forward module is the source of instability at \(\approx\)200B tokens of training. The latter is consistent with \citet{scaling_fp8}.

As an example of parameter-efficient optimizers, we report pre-training results with Adam-mini~\citep{adam_mini} in Figure~\ref{fig:opt2_adam_mini}.
The Adam-mini optimizer stabilizes BF16 baseline training and slightly improves DiffQ compared to AdamW~\citep{loshchilov2019decoupledweightdecayregularization}, while GaussWS is orthogonal to the choice of optimizer.

\begin{figure*}[ht]
    \includegraphics[trim={0 12 0 12},clip,width=\linewidth]{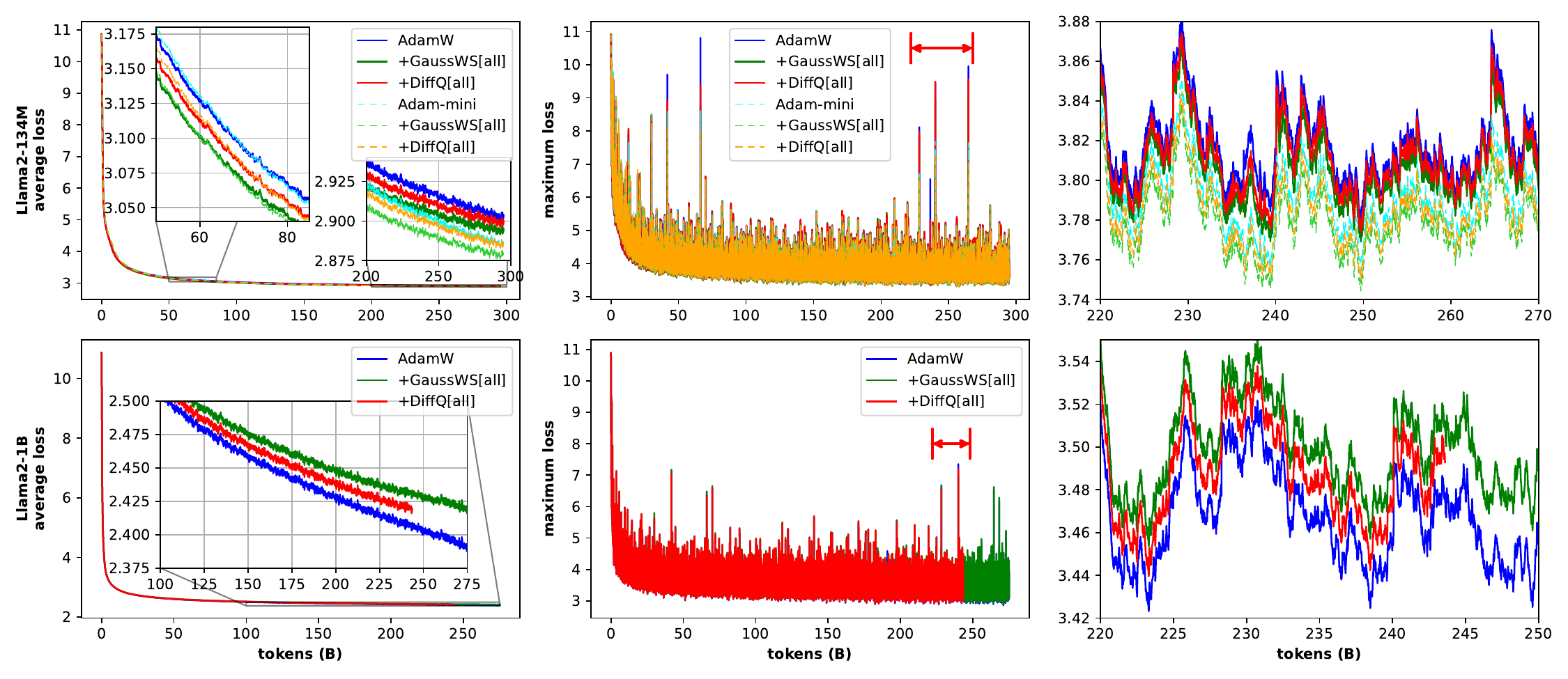}
    \caption{Training loss curve of the Llama2-\{134M, 1B\} models on the C4 dataset. First column represents average loss and the other two represent maximum loss.
    Third column corresponds to the range annotated with the orange arrow on the second column.
    For better visualization, weighted moving average is used with \(\alpha=1/16\) on left column and \(\alpha=1/128\) on right column.}
    \label{fig:llama2_training}
\end{figure*}

\textbf{The Llama2-134M and Llama2-1B models} are trained from scratch on the C4 dataset up to 300B and 275B tokens, respectively~\citep{torchtitan}.

As shown on the first row of Figure~\ref{fig:llama2_training}, GaussWS improves Llama2-134M pre-training for both average and worst case.
Results with GaussWS require fewer tokens for Adam-mini to surpass AdamW.
DiffQ lies in between baseline BF16 and GaussWS unlike GPT2 results.

On the other hand, the second row of Figure~\ref{fig:llama2_training} show that Gaussian weight sampling slightly degrades Llama2-1B pre-training for both average and worst case.
We conjecture that the increase in loss is attributed to the lower bitwidth \(b_t\) as shown in Figure~\ref{fig:bitwidth_result}.
Note that optimal precision \(b_\text{opt}\) of larger, over-trained models tend to be larger~\cite{kumar2024scalinglawsprecision}\cite{sun2025scalinglawsfloatingpoint}.
It implies that the results can be improved by tuning hyperparameters \(b_\text{init}\), \(b_\text{target}\), weight decay on \(b_i\) and an optional \(\lambda\), as in Figure~\ref{fig:larger_hyperparam}.

\begin{figure}
  \includegraphics[trim={0 12 0 12},clip,width=\linewidth]{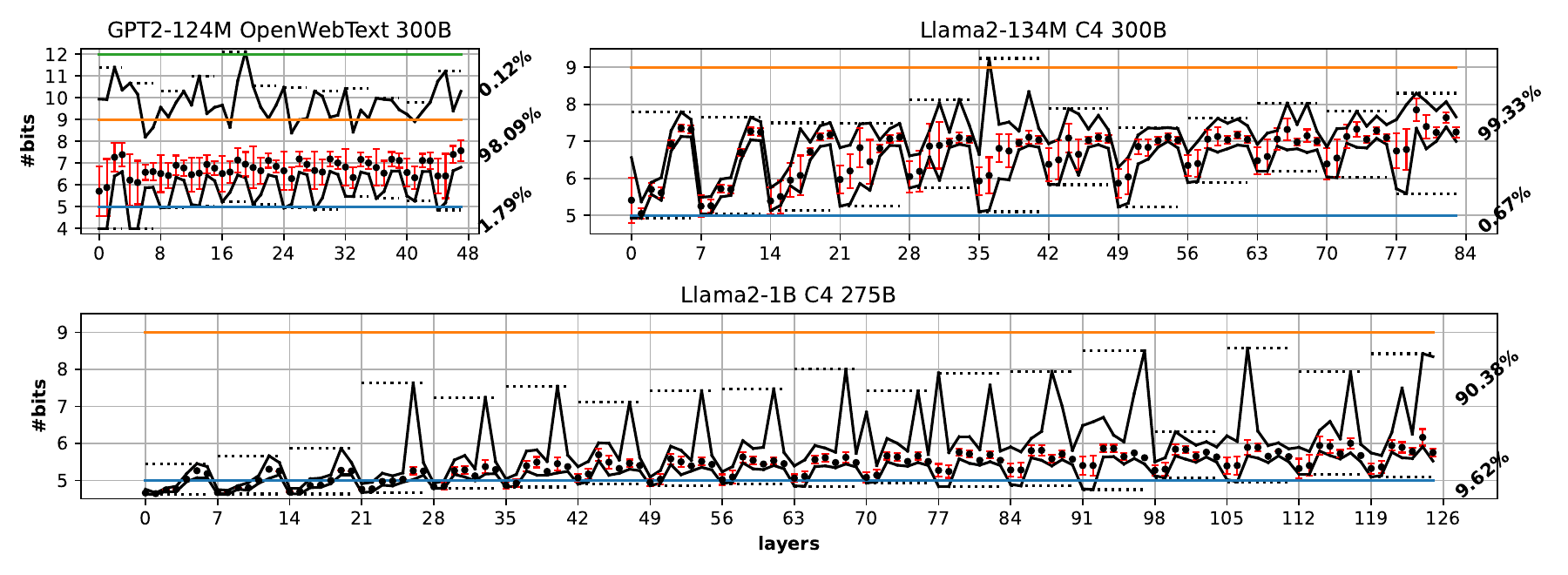}\hfill
  \caption{Resulting bitwidth \(b_t\).
  Dots and red lines indicate layerwise mean and standard deviation.
  Upper and lower solid lines represent layerwise maximum and minimum while dotted lines represent transformer-blockwise maximum and minimum.
  Lines on 5, 9 and 12-bit divide the parameters into 3 tiers, and the percentages on the right-hand side represent the ratio of parameters for each.
  The order of layers is (qkv, out, up, down) for GPT2 and (q, k, v, out, gate, down, up) for Llama2.}
  \label{fig:bitwidth_result}
\end{figure}

\textbf{Resulting bitwidth \(b_t\)} is visualized in Figure~\ref{fig:bitwidth_result}.
The results imply that GPT2-style transformer blocks require a greater dynamic range for the parameters compared to their Llama2-style counterparts.
More than 99\% of the parameters are robust to PQN with \(b_t\le 9\), irrespective of the architecture.

\begin{table}
\tabcolsep=0.11cm
    \centering
    \caption{Tokens per second per GPU (left) and GPU memory usage (right) during Llama2 pre-training on the A100-SXM4-40G GPU.
    Subscript denotes relative overhead compared to BF16 baseline.
    We used local batch size \{24, 8, 2, 2\} respectively for each case of \{134M, 1B, 3B, 70B\(^\dagger\)\} with fixed sequence length of 2048.
    ``\(^\dagger\)''~denotes that only 4 layers out of total 80 layers of the model are used.}
    \begin{tabular}{l cccc c cccc}
    \hline
       & \multicolumn{4}{c}{tps per GPU (\(\times 10^3\))} &&\multicolumn{4}{c}{GPU memory (GiB)} \\
    \cline{2-5} \cline{7-10}
        & \textbf{134M} & \textbf{1B} & \textbf{3B} & \textbf{70B\(^\dagger\)} && \textbf{134M} & \textbf{1B} & \textbf{3B} & \textbf{70B\(^\dagger\)} \\
   \hline
        AdamW & 143.3 & 26.0 & 7.17 & 7.22 && 34.00 & 30.69 & 19.07 & 18.83 \\
        +GaussWS[all] & 141.3\(_{1.40\%}\) & 25.5\(_{1.92\%}\) & 6.79\(_{5.30\%}\) & 6.94\(_{3.88\%}\) && 34.16 & 32.42 & 24.99 & 23.42 \\
        +DiffQ[all] & 116.6\(_{18.63\%}\) & 23.1\(_{11.15\%}\) & 5.00\(_{30.26\%}\) & 5.21\(_{27.84\%}\) && 34.18 & 32.64 & 25.76 & 25.57 \\
    \hline
        Adam-mini & 93.9 & 21.1 & 5.41 & 7.25 && 33.87 & 29.70 & 17.50 & 17.00  \\
        +GaussWS[all] & 85.7\(_{8.73\%}\) & 19.3\(_{8.53\%}\) & 4.65\(_{13.97\%}\) & 6.97\(_{3.86\%}\) && 34.03 & 31.43 & 23.43 & 21.58 \\
        +DiffQ[all] & 82.3\(_{12.35\%}\) & 17.8\(_{15.64\%}\) & 3.71\(_{31.32\%}\) & 5.19\(_{28.41\%}\) && 34.05 & 31.65 & 24.19 & 23.73 \\
    \hline
    \end{tabular}
    \label{tab:overhead_result}
\end{table}

\begin{figure}
   \includegraphics[trim={0 12 0 12},clip,width=\linewidth]{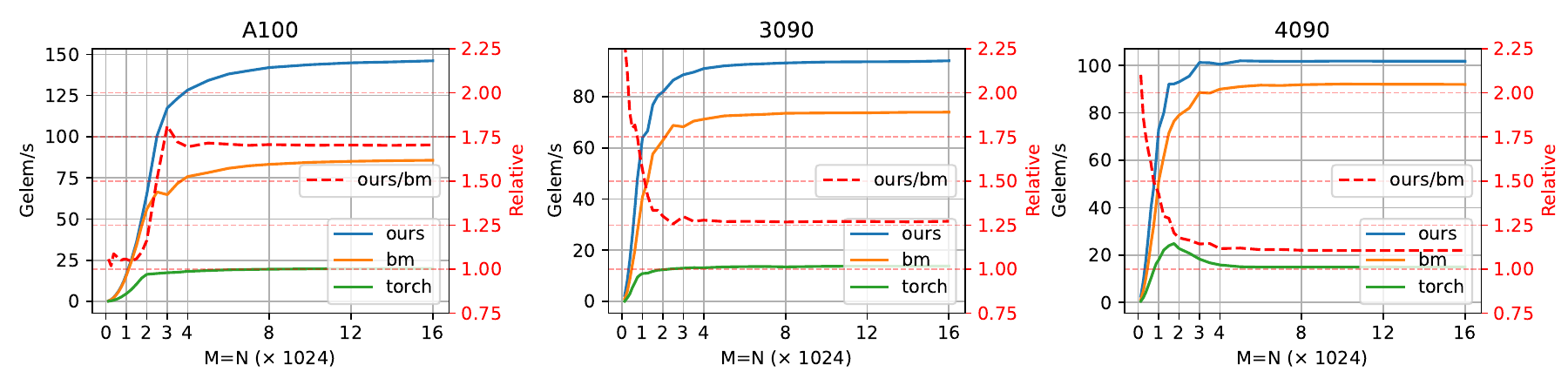}\hfill
  \captionof{figure}{Forward pass benchmark results for the PyTorch layer implementing Equation~\ref{eq:gws_forward} on a matrix \(w_{(M, N)}\).
  Absolute throughput in \(10^9\) elements per second.
  ``torch'' indicates PyTorch baseline while the other two are implemented in Triton.
  ``bm'' implements Box-Muller transform and ``ours'' implements the proposed generation method described in Section~\ref{sec:generate}.}
  \label{fig:kernel}
\end{figure}

\subsection{Overhead}\label{sec:overhead_result}
Table~\ref{tab:overhead_result} reports the throughput and GPU memory usage during the Llama2 model training.
The proposed generation method minimizes \textbf{computational overhead}.
The geometric mean of the overhead on training throughput for Llama2-\{134M, 1B, 3B, 70B\(^\dagger\)\} with AdamW is 3.14\% and 22.34\% for GaussWS and DiffQ, respectively, while it is 8.84\% and 22.35\% with Adam-mini.

On the other hand, \textbf{GPU memory overhead} is 2 bytes per parameter to store \(\hat{w}\) in BF16.
The proposed method requires less layerwise temporary memory to store \(R\), using 0.5 bytes per element for \(\approx\lfloor \mathcal{N}(0,1)/2\rceil\) compared to 2 bytes for \(U(-0.5, 0.5)\).

Figure~\ref{fig:kernel} presents the results of unit benchmark for the forward pass of the proposed method.
Both the proposed method and the Box-Muller method demonstrate at least a \(3\times\) improvement compared to the PyTorch baseline, as they are implemented in Triton and reduce global memory communication.
The proposed noise generation method enhances throughput compared to the Box-Muller method across all test cases.
It is particularly effective with larger matrix and the A100 GPU.
Note that weight dimension of Llama 3.2 1B ranges from (2k, 0.5k) to (2k, 8k) while Llama 3.1 405B ranges from (16k, 1k) to (16k, 16k).

\section{Discussion, broader impact and limitation}\label{sec:discussion}
The large discrepancy in loss between GaussWS and DiffQ in the GPT2-124M model pre-training, as opposed to the Llama2 models, implies that the proposed \(R\) is effective in handling the larger dynamic range of GPT2 parameters via stochastic precision annealing.

The stable pre-training results achieved with the proposed method suggest that the low-precision FP datatypes in Table~\ref{tab:fp_bit} are sufficient.
Specifically, the bitwidth results in Figure~\ref{fig:bitwidth_result} demonstrate the compatibility of the parameters with \(b_t\le 5\), \(\le 9\) and \(\le 12\), which are supported by the datatypes FP8\_e3m4, FP12\_e4m7 and FP16\_e5m10, respectively.

GaussWS is comparable to FQT, which is more efficient but does not guarantee precision scalability and training stability.
We believe this tradeoff is meaningful considering the cost of extensive training required to check stability and the limited throughput improvement of FQT, \eg \(\approx 34\%\) over BF16 \citep{scaling_fp8}.
In conjunction, GaussWS can serve as a cornerstone for the next generation of unified quantization frameworks and standardized datatypes for machine learning.

The proposed method is applied only on weight, leaving activation and gradient same as baseline BF16.
In particular, it is impossible to conduct differentiable search on gradient.
Extending the proposed method to activation is left as future work.

\begin{ack}
This was supported by Mobile eXperience(MX) Business, Samsung Electronics Co., Ltd.
\end{ack}

\bibliography{main}

\begin{thebibliography}{34}
\providecommand{\natexlab}[1]{#1}
\providecommand{\url}[1]{\texttt{#1}}
\expandafter\ifx\csname urlstyle\endcsname\relax
  \providecommand{\doi}[1]{doi: #1}\else
  \providecommand{\doi}{doi: \begingroup \urlstyle{rm}\Url}\fi

\bibitem[Bengio et~al.(2013)Bengio, Léonard, and Courville]{STE}
Y.~Bengio, N.~Léonard, and A.~Courville.
\newblock Estimating or propagating gradients through stochastic neurons for conditional computation, 2013.
\newblock URL \url{https://arxiv.org/abs/1308.3432}.

\bibitem[Box and Muller(1958)]{box_muller}
G.~E.~P. Box and M.~E. Muller.
\newblock {A Note on the Generation of Random Normal Deviates}.
\newblock \emph{The Annals of Mathematical Statistics}, 29\penalty0 (2):\penalty0 610--611, 1958.
\newblock ISSN 0003-4851.
\newblock \doi{10.1214/aoms/1177706645}.

\bibitem[Chen et~al.(2025)Chen, Xi, Zhu, and Chen]{chen2025oscillationreducedmxfp4trainingvision}
Y.~Chen, H.~Xi, J.~Zhu, and J.~Chen.
\newblock Oscillation-reduced mxfp4 training for vision transformers, 2025.
\newblock URL \url{https://arxiv.org/abs/2502.20853}.

\bibitem[DeepSeek-AI et~al.(2025)DeepSeek-AI, Liu, Feng, Xue, Wang, et~al.]{deepseek_v3}
DeepSeek-AI, A.~Liu, B.~Feng, B.~Xue, B.~Wang, et~al.
\newblock Deepseek-v3 technical report, 2025.
\newblock URL \url{https://arxiv.org/abs/2412.19437}.

\bibitem[Dettmers et~al.(2023)Dettmers, Pagnoni, Holtzman, and Zettlemoyer]{QLoRA}
T.~Dettmers, A.~Pagnoni, A.~Holtzman, and L.~Zettlemoyer.
\newblock Qlora: Efficient finetuning of quantized llms, 2023.
\newblock URL \url{https://arxiv.org/abs/2305.14314}.

\bibitem[Défossez et~al.(2022)Défossez, Adi, and Synnaeve]{diffq}
A.~Défossez, Y.~Adi, and G.~Synnaeve.
\newblock Differentiable model compression via pseudo quantization noise, 2022.
\newblock URL \url{https://arxiv.org/abs/2104.09987}.

\bibitem[Fishman et~al.(2025)Fishman, Chmiel, Banner, and Soudry]{scaling_fp8}
M.~Fishman, B.~Chmiel, R.~Banner, and D.~Soudry.
\newblock Scaling fp8 training to trillion-token llms, 2025.
\newblock URL \url{https://arxiv.org/abs/2409.12517}.

\bibitem[Gokaslan and Cohen(2019)]{Gokaslan2019OpenWeb}
A.~Gokaslan and V.~Cohen.
\newblock Openwebtext corpus.
\newblock \url{http://Skylion007.github.io/OpenWebTextCorpus}, 2019.

\bibitem[Grattafiori et~al.(2024)Grattafiori, Dubey, Jauhri, Pandey, Kadian, et~al.]{llama3}
A.~Grattafiori, A.~Dubey, A.~Jauhri, A.~Pandey, A.~Kadian, et~al.
\newblock The llama 3 herd of models, 2024.
\newblock URL \url{https://arxiv.org/abs/2407.21783}.

\bibitem[Hu et~al.(2021)Hu, Shen, Wallis, Allen-Zhu, Li, et~al.]{LoRA}
E.~J. Hu, Y.~Shen, P.~Wallis, Z.~Allen-Zhu, Y.~Li, et~al.
\newblock {LoRA: Low-Rank Adaptation of Large Language Models}.
\newblock \emph{arXiv}, 2021.
\newblock \doi{10.48550/arxiv.2106.09685}.

\bibitem[Karpathy(2022)]{Karpathy2022}
A.~Karpathy.
\newblock \text{NanoGPT}.
\newblock \url{https://github.com/karpathy/nanoGPT}, 2022.
\newblock Licensed under the MIT License.

\bibitem[Kumar et~al.(2024)Kumar, Ankner, Spector, Bordelon, Muennighoff, et~al.]{kumar2024scalinglawsprecision}
T.~Kumar, Z.~Ankner, B.~F. Spector, B.~Bordelon, N.~Muennighoff, et~al.
\newblock Scaling laws for precision, 2024.
\newblock URL \url{https://arxiv.org/abs/2411.04330}.

\bibitem[Lathrop et~al.(2011)Lathrop, Costa, Kramer, Salmon, Moraes, et~al.]{philox}
S.~Lathrop, J.~Costa, W.~Kramer, J.~K. Salmon, M.~A. Moraes, et~al.
\newblock {Parallel random numbers: As easy as 1, 2, 3}.
\newblock \emph{2011 International Conference for High Performance Computing, Networking, Storage and Analysis (SC)}, pages 1--12, 2011.
\newblock \doi{10.1145/2063384.2063405}.

\bibitem[Liang et~al.(2024)Liang, Liu, Wright, Constable, Gu, et~al.]{torchtitan}
W.~Liang, T.~Liu, L.~Wright, W.~Constable, A.~Gu, et~al.
\newblock Torchtitan: One-stop pytorch native solution for production ready llm pre-training, 2024.
\newblock URL \url{https://arxiv.org/abs/2410.06511}.

\bibitem[Loeschcke et~al.(2024)Loeschcke, Toftrup, Kastoryano, Belongie, and Snæbjarnarson]{LoQT}
S.~Loeschcke, M.~Toftrup, M.~J. Kastoryano, S.~Belongie, and V.~Snæbjarnarson.
\newblock {LoQT: Low Rank Adapters for Quantized Training}.
\newblock \emph{arXiv}, 2024.
\newblock \doi{10.48550/arxiv.2405.16528}.

\bibitem[Loshchilov and Hutter(2019)]{loshchilov2019decoupledweightdecayregularization}
I.~Loshchilov and F.~Hutter.
\newblock Decoupled weight decay regularization, 2019.
\newblock URL \url{https://arxiv.org/abs/1711.05101}.

\bibitem[Mattson et~al.(2019)Mattson, Muzahid, Solar-Lezama, Tillet, Kung, and Cox]{triton}
T.~Mattson, A.~Muzahid, A.~Solar-Lezama, P.~Tillet, H.~T. Kung, and D.~Cox.
\newblock {Triton: an intermediate language and compiler for tiled neural network computations}.
\newblock \emph{Proceedings of the 3rd ACM SIGPLAN International Workshop on Machine Learning and Programming Languages}, pages 10--19, 2019.
\newblock \doi{10.1145/3315508.3329973}.

\bibitem[Micikevicius et~al.(2022)Micikevicius, Stosic, Burgess, Cornea, Dubey, et~al.]{fp8fordl}
P.~Micikevicius, D.~Stosic, N.~Burgess, M.~Cornea, P.~Dubey, et~al.
\newblock {FP8 Formats for Deep Learning}.
\newblock \emph{arXiv}, 2022.
\newblock \doi{10.48550/arxiv.2209.05433}.

\bibitem[Overton(2020)]{romu}
M.~A. Overton.
\newblock Romu: Fast nonlinear pseudo-random number generators providing high quality, 2020.
\newblock URL \url{https://arxiv.org/abs/2002.11331}.

\bibitem[Peng et~al.(2023)Peng, Wu, Wei, Zhao, Yang, et~al.]{FP8_LM}
H.~Peng, K.~Wu, Y.~Wei, G.~Zhao, Y.~Yang, et~al.
\newblock Fp8-lm: Training fp8 large language models, 2023.
\newblock URL \url{https://arxiv.org/abs/2310.18313}.

\bibitem[Radford et~al.(2019)Radford, Wu, Child, Luan, Amodei, and Sutskever]{radford2019language}
A.~Radford, J.~Wu, R.~Child, D.~Luan, D.~Amodei, and I.~Sutskever.
\newblock Language models are unsupervised multitask learners.
\newblock \emph{OpenAI}, 2019.
\newblock URL \url{https://cdn.openai.com/better-language-models/language_models_are_unsupervised_multitask_learners.pdf}.

\bibitem[Raffel et~al.(2023)Raffel, Shazeer, Roberts, Lee, Narang, et~al.]{raffel2023exploringlimitstransferlearning}
C.~Raffel, N.~Shazeer, A.~Roberts, K.~Lee, S.~Narang, et~al.
\newblock Exploring the limits of transfer learning with a unified text-to-text transformer, 2023.
\newblock URL \url{https://arxiv.org/abs/1910.10683}.

\bibitem[Rajbhandari et~al.(2020)Rajbhandari, Rasley, Ruwase, and He]{rajbhandari2020zeromemoryoptimizationstraining}
S.~Rajbhandari, J.~Rasley, O.~Ruwase, and Y.~He.
\newblock Zero: Memory optimizations toward training trillion parameter models, 2020.
\newblock URL \url{https://arxiv.org/abs/1910.02054}.

\bibitem[Rouhani et~al.(2023)Rouhani, Zhao, More, Hall, Khodamoradi, et~al.]{ocp_mx}
B.~D. Rouhani, R.~Zhao, A.~More, M.~Hall, A.~Khodamoradi, et~al.
\newblock Microscaling data formats for deep learning, 2023.
\newblock URL \url{https://arxiv.org/abs/2310.10537}.

\bibitem[Shin et~al.(2023)Shin, So, Park, Kang, Yoo, and Park]{nipq}
J.~Shin, J.~So, S.~Park, S.~Kang, S.~Yoo, and E.~Park.
\newblock Nipq: Noise proxy-based integrated pseudo-quantization, 2023.
\newblock URL \url{https://arxiv.org/abs/2206.00820}.

\bibitem[Sun et~al.(2025)Sun, Li, Xie, Han, Wu, et~al.]{sun2025scalinglawsfloatingpoint}
X.~Sun, S.~Li, R.~Xie, W.~Han, K.~Wu, et~al.
\newblock Scaling laws for floating point quantization training, 2025.
\newblock URL \url{https://arxiv.org/abs/2501.02423}.

\bibitem[Team et~al.(2025)Team, Kamath, Ferret, Pathak, Vieillard, et~al.]{gemmateam2025gemma3technicalreport}
G.~Team, A.~Kamath, J.~Ferret, S.~Pathak, N.~Vieillard, et~al.
\newblock Gemma 3 technical report, 2025.
\newblock URL \url{https://arxiv.org/abs/2503.19786}.

\bibitem[Touvron et~al.(2023)Touvron, Martin, Stone, Albert, Almahairi, et~al.]{touvron2023llama2openfoundation}
H.~Touvron, L.~Martin, K.~Stone, P.~Albert, A.~Almahairi, et~al.
\newblock Llama 2: Open foundation and fine-tuned chat models, 2023.
\newblock URL \url{https://arxiv.org/abs/2307.09288}.

\bibitem[Tseng et~al.(2025)Tseng, Yu, and Park]{tseng2025trainingllmsmxfp4}
A.~Tseng, T.~Yu, and Y.~Park.
\newblock Training llms with mxfp4, 2025.
\newblock URL \url{https://arxiv.org/abs/2502.20586}.

\bibitem[Vaswani et~al.(2023)Vaswani, Shazeer, Parmar, Uszkoreit, Jones, et~al.]{vaswani2023attentionneed}
A.~Vaswani, N.~Shazeer, N.~Parmar, J.~Uszkoreit, L.~Jones, et~al.
\newblock Attention is all you need, 2023.
\newblock URL \url{https://arxiv.org/abs/1706.03762}.

\bibitem[Wang et~al.(2025)Wang, Gong, Liu, Zhao, Yang, et~al.]{llm_fp4train}
R.~Wang, Y.~Gong, X.~Liu, G.~Zhao, Z.~Yang, et~al.
\newblock {Optimizing Large Language Model Training Using FP4 Quantization}.
\newblock \emph{arXiv}, 2025.
\newblock \doi{10.48550/arxiv.2501.17116}.

\bibitem[Zhang et~al.(2024)Zhang, Chen, Li, Ding, Wu, et~al.]{adam_mini}
Y.~Zhang, C.~Chen, Z.~Li, T.~Ding, C.~Wu, et~al.
\newblock {Adam-mini: Use Fewer Learning Rates To Gain More}.
\newblock \emph{arXiv}, 2024.
\newblock \doi{10.48550/arxiv.2406.16793}.

\bibitem[Zhao et~al.(2023)Zhao, Gu, Varma, Luo, Huang, et~al.]{zhao2023pytorchfsdpexperiencesscaling}
Y.~Zhao, A.~Gu, R.~Varma, L.~Luo, C.-C. Huang, et~al.
\newblock Pytorch fsdp: Experiences on scaling fully sharded data parallel, 2023.
\newblock URL \url{https://arxiv.org/abs/2304.11277}.

\bibitem[Zhu et~al.(2024)Zhu, Zhang, Cong, Liu, Park, et~al.]{apollo}
H.~Zhu, Z.~Zhang, W.~Cong, X.~Liu, S.~Park, et~al.
\newblock {APOLLO: SGD-like Memory, AdamW-level Performance}.
\newblock \emph{arXiv}, 2024.
\newblock \doi{10.48550/arxiv.2412.05270}.

\end{thebibliography}

\appendix
\renewcommand\thefigure{\thesection.\arabic{figure}}
\renewcommand\thetable{\thesection.\arabic{table}}

\pagebreak

\setcounter{figure}{0}
\setcounter{table}{0}
\section{Proof}
\label{sec:proof}
\textbf{\cref{prop:precision_smallest}.}
\begin{proof}
Consider adding PQN to \(\{0, \pm\epsilon\}\) as following:
\begin{equation}
    \{0, \pm\epsilon\} + R\odot \text{broadcast}_{b_l}\left(\max_{b_l}(|w|)\cdot 2^{1-b_t}\right)
\end{equation}
Without loss of generality, consider a single block and the smallest positive perturbation \(2^{\tau}=\min_{R_{ij}\neq0}|R|\) that limits the precision of floating-point \(\pm\epsilon\) in the least.
The value of interest is:
\begin{equation}
    \{0, \pm\epsilon\}+2^{\tau+1-b_t}\max(|w|)
\end{equation}
Note that \(2^{\tau+1-b_t}\max(|w|)\) lies in the range \([2^{\lfloor s\rfloor},2^{\lfloor s\rfloor+1})\) where \(s\triangleq \tau+1-b_t+\log_2\max(|w|)\).
Assuming \(0<\epsilon<2^{\tau+1-b_t}\max(|w|)\), then 
\(\pm\epsilon+2^{\tau+1-b_t}\max(|w|)\) lies in the range \([0,2^{\lfloor s\rfloor+2})\), making the stepsize at most \(2^{\lfloor s\rfloor+1-m}\).
Therefore, the condition that suffices to prevent underflow is:
\begin{equation}
    \xi>\lfloor s\rfloor+1-m=\lfloor \tau+2-b_t+\log_2\max(|w|)\rfloor-m
\end{equation}
\end{proof}

\textbf{\cref{prop:interpretation}.}
\begin{proof}
First, we consider dynamic range of \(w\) via stepsize of the smallest positive and the largest values in floating-point representation of \(|w|\).
The given condition is equivalent to the following, as in \cref{prop:precision_smallest}:
\begin{equation}
|\epsilon|>2^{\lfloor \tau+2-b_t+\log_2\max(|w|)\rfloor-m}
\end{equation}
Note that \(2^{\lfloor \log_2\max(|w|)\rfloor-m}\) is the largest stepsize that corresponds to the widest exponent range when representing \(w\) in FP.
We can count the number of FP exponent ranges from the largest to the smallest, to get \((-\tau-1+b_t)\) effective normal ranges.
FP with \(\lceil\log_2(-\tau+b_t+1)\rceil\)-bit exponent suffices to support all effective exponent ranges of \(w\), encompassing normal ranges, a subnormal range and a range for NaN/Inf.

Then, we consider the dynamic range of \(\hat{w}=w+R\cdot 2^{1-b_t}\max(|w|)\) through the smallest non-zero and the largest possible values of \(|\hat{w}|\).
For lower bound, without loss of generality, consider \(w>0\) and \(|w-2^{\tau+1-b_t}\max(|w|)|\).
Non-zero lower bound of \(|\hat{w}|\) can be derived by a single stepsize in the exponent range that \(\max\left(|w_{ij}|,2^{\tau+1-b_t}\max(|w|)\right)\) reside.
The problem of interest is \(\min_{i,j}\max\left(|w_{ij}|,2^{\tau+1-b_t}\max(|w|)\right)\), and the solution lies in the following range:
\begin{equation}
\label{eq:hat_w_lower_bound}
   [2^{\lfloor\tau+1-b_t+\log_2\max(|w|)\rfloor},2^{\lfloor\tau+2-b_t+\log_2\max(|w|)\rfloor})
\end{equation}
On the other hand for upper bound, assume \(0<|R\cdot 2^{1-b_t}\max(|w|)|<\max(|w|)\).
The widest FP exponent range that \(\hat{w}=w+R\cdot 2^{1-b_t}\max(|w|)\) can fall into is:
\begin{equation}
\label{eq:hat_w_upper_bound}
    [2^{\lfloor \log_2\max(|w|)\rfloor+1},2^{\lfloor\log_2\max(|w|)\rfloor+2})
\end{equation}
which is off-by-one from the range \([2^{\lfloor \log_2\max(|w|)\rfloor},2^{\lfloor \log_2\max(|w|)\rfloor+1})\) of \(w\).
There are \((-\tau+1+b_t)\) normal exponent ranges, from the upper bound in Equation~\ref{eq:hat_w_upper_bound} to the lower bound in Equation~\ref{eq:hat_w_lower_bound} (both inclusive).
FP with \(\lceil\log_2(-\tau+3+b_t)\rceil\)-bit exponent suffices to support the normal ranges, a subnormal range and a range for NaN/Inf.
\end{proof}

\textbf{\cref{prop:stochastic_precision_annealing}.}
\begin{proof}
As shown in the proof of \cref{prop:precision_smallest}, values of small magnitude \(\pm\epsilon\) in \(w\), where \(|\epsilon|=2^\xi\) and \(\xi<\lfloor \tau+2-b_t+\log_2\max(|w|)\rfloor-m\), can underflow and effectively become \(\epsilon=0\) during the computation of \(fp_{e,m}(\hat{w})\) with \(Pr(R\neq 0)= 1-p\).
On the other hand, \(R_{ij}=0\) preserves the precision of \(w_{ij}\) with \(Pr(R=0)=p\).
\end{proof}

\setcounter{figure}{0}
\setcounter{table}{0}
\section{Comparison of related methods}
\label{sec:comparison}

Table~\ref{tab:comparison} compares the important properties of related methods: BF16 baseline, FQT and PQT (DiffQ, NIPQ and GaussWS).
Notably, GaussWS achieves the best stability and flexibility out of all methods without compromising accuracy and throughput.
NIPQ was not tested because of its larger overhead.
Clamping increases computational overhead and requires additional GPU memory to store the corresponding mask.
While DiffQ and NIPQ aim for integer quantization, GaussWS targets both floating-point and integer quantization.

\begin{table}
    \centering
    \caption{Comparison of related methods. ``-'' denotes a case that is not tested due to its overhead.}
    \begin{tabular}{c|cccc}
         & Throughput & Stability & Accuracy & Flexibility \\
         \hline
         BF16 & Good & Good & Best & No \\
         FQT & Best & No guarantee & No guarantee & No \\
         DiffQ~\citep{diffq} & Worse & Best & Good & Good \\
         NIPQ~\citep{nipq} & Worst & - & - & Good \\
         GaussWS & Good & Best & Best & Best \\
    \end{tabular}
    \label{tab:comparison}
\end{table}

\setcounter{figure}{0}
\setcounter{table}{0}
\section{Floating-point datatypes for resulting bitwidth}
\label{sec:fp_datatype}
Table~\ref{tab:fp_bit} reports the FP datatypes discussed in Section~\ref{sec:characteristics} with the proposed \(R\) and wide range of \(b_t\).
Note that we did not cover mantissa of \(w\).
The number of mantissa bits for \(w\) should be larger than that of \(\hat{w}\) to fully utilize the datatype of \(\hat{w}\).
Additionally, it should account for the dynamics of the update on \(w\), \eg \(\gamma\cdot m_t/(\sqrt{v_t}+\epsilon)\) for AdamW.

\begin{table}
  \centering
  \captionof{table}{Floating-point datatypes that are discussed in Section~\ref{sec:characteristics} with respect to \(b_t\) of the proposed method.
  ``Datatype \(\hat{w}\)'' denotes current de-facto standard FP datatypes that support the distribution of \(\hat{w}\).
  FP\texttt{n}\_e\texttt{E}m\texttt{M} represents \texttt{n}-bit floating-point with \texttt{E}-bit exponent and \texttt{M}-bit mantissa.
  }
  \begin{tabular}{ccccc}
    \hline
    \textbf{\(b_t\)} & \textbf{Exponent \(w\)} & \textbf{\(e\) (Exponent \(\hat{w}\))} & \textbf{\(m\) (Mantissa \(\hat{w}\))} & \textbf{Datatype \(\hat{w}\)} \\
    \hline
    3 & 2 & 3 & 1 & FP6\_e3m2 \\
    4 & 3 & 3 & 2 & FP6\_e3m2 \\
    5 & 3 & 3 & 3 & FP8\_e4m3, FP8\_e3m4 \\
    6 & 3 & 4 & 4 & BF16, FP16 \\
    7 & 3 & 4 & 5 & BF16, FP16 \\
    8 & 4 & 4 & 6 & BF16, FP16 \\
    9 & 4 & 4 & 7 & BF16, FP16 \\
    10 & 4 & 4 & 8 & FP16 \\
    11 & 4 & 4 & 9 & FP16 \\
    12 & 4 & 4 & 10 & FP16 \\
    13 & 4 & 4 & 11 & FP32 \\
    \hline
  \end{tabular}
  \label{tab:fp_bit}
\end{table}

\setcounter{figure}{0}
\setcounter{table}{0}
\section{Visualized example of forward-backward inconsistency}
\label{sec:example_forward_backward_inconsistency}
Figure~\ref{fig:mx_fb_error} visualizes forward-backward error in vector-wise quantization with the quantization axis only on the inner dimension of the matrix multiplication.
The second matrix represents the value reflected in the backward pass while the third matrix represents its forward pass counterpart.

\begin{figure}
    \centering
    \includegraphics[trim={0 12 0 6},clip,width=\linewidth]{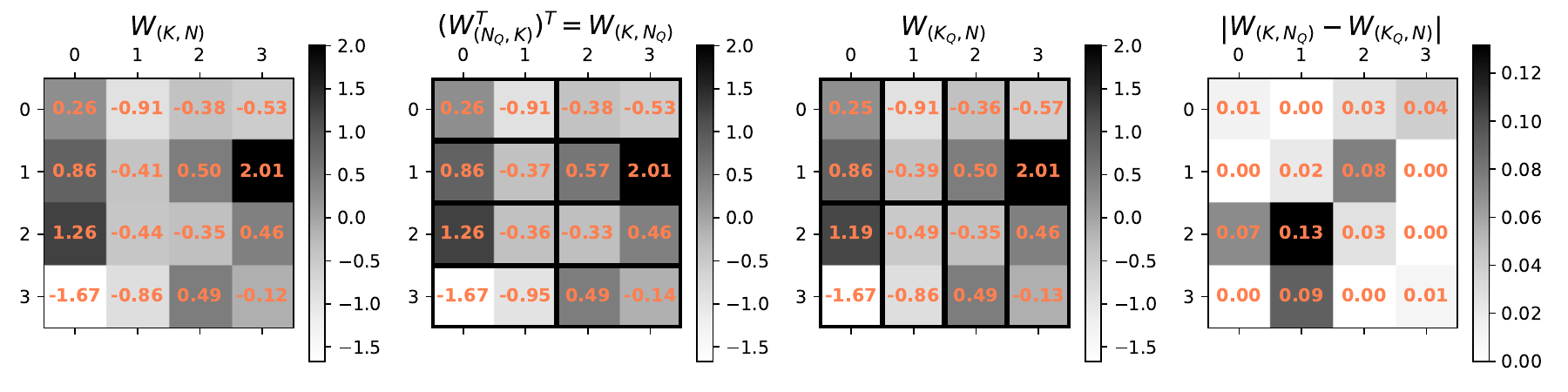}
    \caption{An example of vector-wise quantization on \(W_{(K,N)}\sim \mathcal{N}(0,1)\) and its forward-backward discrepancy, where \(K=N=4\).
    Boxes wrapped in bold solid lines represent quantization groups with an internal datatype of INT4 and a block size of 2.
    Visualized values are fake-quantized.}
    \label{fig:mx_fb_error}
\end{figure}

\setcounter{figure}{0}
\setcounter{table}{0}
\section{Pre-train hyperparameter, setup and resource}
\label{sec:experiment_detail}
Table~\ref{tab:hyperparam} reports hyperparameters used for pre-training. Learning rate was linearly scheduled with warmup.
The Llama2-1B model training requires more than 24GiB of GPU memory.

\begin{table}
\tabcolsep=0.11cm
  \centering
  \caption{Hyperparameters used for pre-training.} %
  \begin{tabular}{cccc}
    \hline
     & \textbf{GPT2-124M} & \textbf{Llama2-134M} & \textbf{Llama2-1B} \\
    \hline
    context window & 1024 & 2048 & 2048 \\
    local batch & 12 & 12 & 8 \\
    local grad. accum. per step & 5 & 1 & 1 \\
    \# GPUs & 8 & 8 & 8 \\
    total steps & 600k & 1.5M & 2.1M \\
    warmup steps & 2k & 5k & 7k \\
    total number of training tokens (\(\times 10^6\)) & 294,912 & 294,912 & 275,251.2 \\
    min learning rate & \{6e-5, 6e-6\} & 1e-6 & 1e-6 \\
    max learning rate & \{6e-4, 6e-5\} & 1e-5 & 1e-5 \\
    weight decay & 0.1 & 0.1 & 0.1 \\
    \(\lambda\) (as in Equation~\ref{eq:bitwidth_loss})  & 1e-4 & 0 & 0 \\
    \hline
  \end{tabular}
  \label{tab:hyperparam}
\end{table}

For GPT2, we used \citet{Karpathy2022} with commit \texttt{9755682b} as a starting point and \texttt{nvcr.io/nvidia/pytorch:24.10-py3} as a training environment.
For Llama2, we used \citet{torchtitan} with commit \texttt{90567fc9} as a starting point and \texttt{ghcr.io/pytorch/pytorch-nightly} with a tag \texttt{2.7.0.dev20250107-cuda12.4-cudnn9-devel} as a training environment.
We used A100-SXM4-40G, RTX 3090 and RTX 4090 GPUs with NVIDIA driver R565.

\setcounter{figure}{0}
\setcounter{table}{0}
\section{Preliminary result of Llama2-1B with higher bitwidth}
\label{sec:larger_hyperparam}
Figure~\ref{fig:larger_hyperparam} reports the preliminary result of pre-training the Llama2-1B model with \(b_\text{init}=8\) and \(b_\text{target}=6\).
Although preliminary, GaussWS with \(b_\text{init}=8\) and \(b_\text{target}=6\) is comparable to BF16 baseline.
The results for GaussWS and DiffQ will be prepared before July 2025.

\begin{figure*}[ht]
    \begin{subfigure}{0.33\linewidth}
        \includegraphics[trim={10 8 40 20},clip,width=\linewidth]{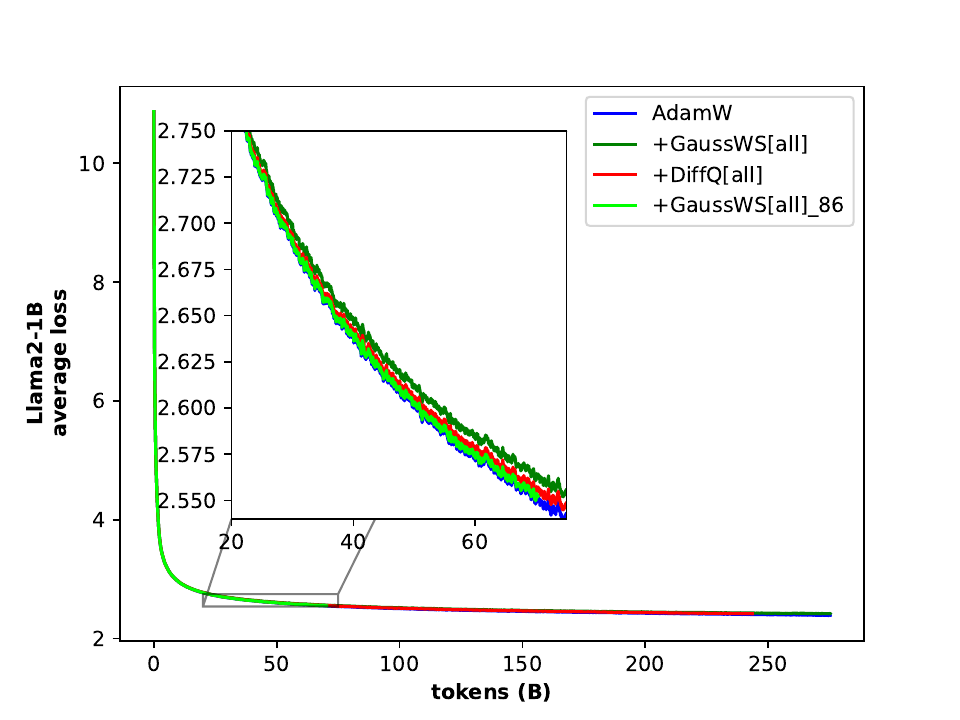}
        \label{fig:larger_avg}
    \end{subfigure}
    \begin{subfigure}{0.33\linewidth}
        \includegraphics[trim={22 8 30 20},clip,width=\linewidth]{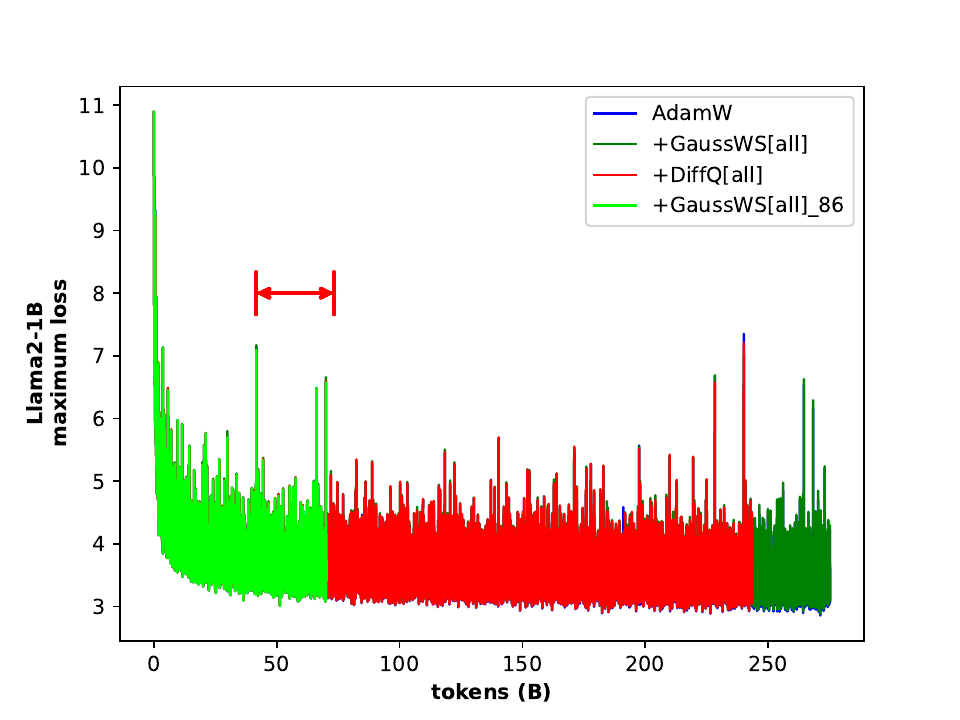}
        \label{fig:larger_max}
    \end{subfigure}
    \begin{subfigure}{0.33\linewidth}
        \includegraphics[trim={24 8 30 20},clip,width=\linewidth]{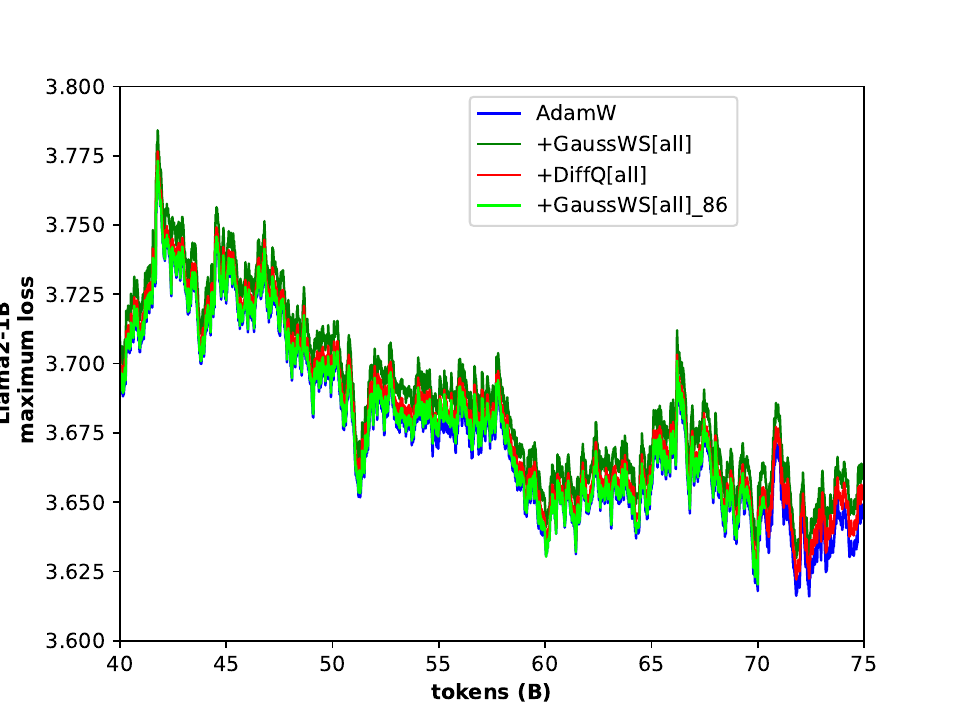}
        \label{fig:larger_max_detail}
    \end{subfigure}
    \caption{Training loss curve of the Llama2-1B model on the C4 dataset including the result with \(b_\text{init}=8\) and \(b_\text{target}=6\). First column represents average loss and the other two represent maximum loss.
    Third column corresponds to the range annotated with the orange arrow on the second column.
    For better visualization, weighted moving average is used with \(\alpha=1/16\) on left column and \(\alpha=1/128\) on right column.}
    \label{fig:larger_hyperparam}
\end{figure*}

\end{document}